%% file: main.tex
\documentclass[11pt]{article}
\usepackage[letterpaper,margin=1in]{geometry}
\usepackage{amsmath,amssymb,amsthm,mathtools}
\usepackage{lmodern,microtype,booktabs,enumitem,graphicx,float}
\usepackage[round,authoryear]{natbib}
\usepackage{tikz}
\usetikzlibrary{arrows.meta,positioning}
\usepackage[colorlinks=true,linkcolor=blue!45!black,citecolor=blue!45!black,urlcolor=blue!45!black]{hyperref}
\hypersetup{pdftitle={Exponential Hardness of Off-Policy Evaluation under History-Dependent Logging},pdfauthor={Pranaya Jajoo}}
\setlist{itemsep=3pt,topsep=5pt}
\numberwithin{equation}{section}
\newtheorem{theorem}{Theorem}[section]
\newtheorem{lemma}[theorem]{Lemma}
\newtheorem{proposition}[theorem]{Proposition}
\newtheorem{corollary}[theorem]{Corollary}
\theoremstyle{definition}
\newtheorem{definition}[theorem]{Definition}
\theoremstyle{remark}
\newtheorem{remark}[theorem]{Remark}
\newcommand{\E}{\mathbb E}
\newcommand{\Pp}{\mathbb P}

\newcommand{\one}{\mathbf 1}
\newcommand{\diag}{\operatorname{diag}}
\newcommand{\KL}{D_{\mathrm{KL}}}
\newcommand{\kl}{\operatorname{kl}}
\newcommand{\TV}{\operatorname{TV}}
\newcommand{\Ber}{\operatorname{Ber}}
\newcommand{\Var}{\operatorname{Var}}
\newcommand{\Aff}{\operatorname{Aff}}
\newcommand{\hold}{\mathsf{k}}
\newcommand{\resetzero}{\mathsf{z}}
\newcommand{\resetone}{\mathsf{o}}
\newcommand{\never}{\mathsf{I}}
\newcommand{\calE}{\mathcal E}
\newcommand{\calM}{\mathcal M}
\newcommand{\pb}{\pi_b}
\newcommand{\pe}{\pi_e}
\newcommand{\J}{J}
\title{\vspace{-1.1em}\textbf{Exponential Hardness of Off-Policy Evaluation\\
under History-Dependent Logging}}
\author{Pranaya Jajoo\\[0.3em]\normalsize University of Alberta}
\date{}
\begin{document}
\maketitle
\vspace{-2.5em}
\begin{abstract}
Can a logged dataset visit every hidden state frequently and still be
exponentially uninformative about a target policy's value? We show that
it can when the logger depends on history. For every horizon $H\ge3$, we
construct two POMDPs with at most two latent states per stage, three
actions, and a common logger with three memory states. Action coverage,
belief coverage, and two behavior-marginal outcome-revealing conditions
all have constants independent of $H$. Nevertheless, evaluating a known
deterministic target policy to accuracy $1/8$ requires
$\Theta((3/2)^H\log(1/\delta))$ logged episodes at confidence $1-\delta$,
for $0<\delta\le1/4$, even when both candidate models are known.
The mechanism is simple: a reset erases the unknown transition that
determines the target value. We characterize the resulting statistical
experiment exactly and obtain a matching optimal estimator. A directed
two-lane gridworld realizes the construction, and trajectory simulations
agree with its finite-sample prediction. The result establishes
intractability for the history-dependent-logging, model-based case
posed by \citet{zhang2025}, under their behavior-marginal definition
of revealing.
\end{abstract}

\section{Introduction}
Off-policy evaluation (OPE) estimates a target policy's return from
trajectories collected by another policy, the logger. In a partially
observable Markov decision process (POMDP), the learner must account
for both hidden state and the difference between the two policies.
Future-dependent value functions exploit latent-state structure by
using future observations and actions as proxies for the current state
\citep{uehara2023,zhang2024}. A central question is whether coverage
and observability conditions on this small state space suffice when
the logger depends on history.

We show that strong versions of both conditions can hold while the
data are exponentially uninformative about the target value. The
construction separates predicting a \emph{current state} from identifying
an earlier \emph{transition parameter}. An agent moves through two
visually identical corridors. Gates reset its hidden lane, making
the current state predictable from the logger's actions but erasing
the unknown initial transition. The target never resets, so that
erased information determines its return.

\paragraph{Contributions.}
We construct two POMDPs with constant action coverage, belief coverage,
and two behavior-marginal outcome-revealing constants, yet prove an
exponential sample lower bound for every estimator given the complete
candidate models and target policy. An exact statistical reduction
yields the optimal estimator and a matching upper bound. Gridworld
simulations reproduce its finite-sample error, and an analysis of
logger memory explains why state-based factorizations fail.

\paragraph{Relation to prior work and scope.}
\citet{zhang2025} leave open the model-based, history-dependent-behavior,
multi-step-revealing case in their Table~1 and conjecture intractability.
Their model-free lower bound restricts target-policy queries. Our
construction instead varies the environment and provides the same
completely known target policy, with unrestricted queries and computation.
It already works for a memoryless target. Revealing throughout this
paper uses the logger's future law averaged over histories conditional
on the physical state; stronger history-conditional observability is
a different assumption. The lower bound concerns fixed offline data,
not active interaction with the environment.

\section{A two-lane construction}
\label{sec:construction}
Every nonterminal action advances one column of a directed two-lane
grid. Continue preserves the lane; an upper or lower gate sets it to
zero or one. The corridors look identical to the learner, and only
the final noisy reward depends on the lane (Figure~\ref{fig:mechanism}).

\subsection{Two candidate environments}
Let \(\mathcal A=\{\hold,\resetzero,\resetone\}\), interpreted as continue (hold the lane),
upper gate (reset to zero), and lower gate (reset to one). The initial state is the singleton
\(\star\); every later state is a bit. In model \(M_\theta\),
\(\theta\in\{0,1\}\), the first transition is deterministic:
\begin{equation}
 T_{1,\theta}(\star,\hold)=\theta,\qquad
 T_{1,\theta}(\star,\resetzero)=0,\qquad
 T_{1,\theta}(\star,\resetone)=1 .
 \label{eq:first}
\end{equation}
Here and below a deterministic transition is denoted by its next state.
For \(2\le h<H\), both models have the same transitions
\begin{equation}
 T_h(s,\hold)=s,\qquad T_h(s,\resetzero)=0,\qquad
 T_h(s,\resetone)=1 .
 \label{eq:later}
\end{equation}
Every observation before \(H\) is a fixed time-tagged blank symbol. The
terminal observation \(Y\in\{0,1\}\) follows the common channel
\begin{equation}
 \Pp(Y=1\mid s_H=0)=\frac14,\qquad
 \Pp(Y=1\mid s_H=1)=\frac34 .
 \label{eq:emission}
\end{equation}
The known reward is zero before \(H\) and equals \(Y\) at \(H\). The
candidates differ only in \eqref{eq:first}, and have identical emissions
and rewards. The final action affects neither the reward nor any observation.

\begin{figure}[!htbp]
\centering
\includegraphics[width=\linewidth]{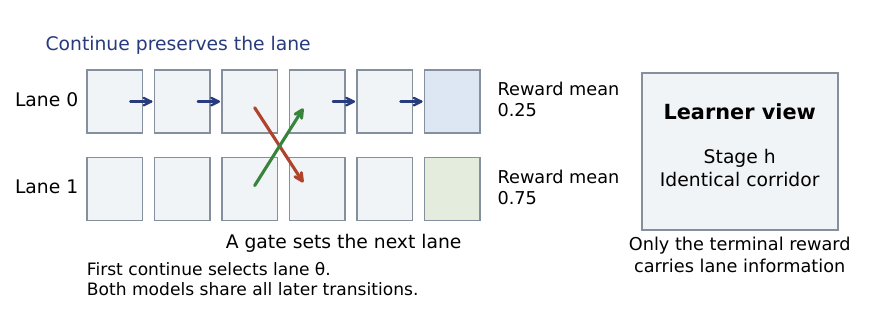}
\caption{The two-lane gridworld. The learner sees identical corridors;
this explanatory rendering exposes the lane. Models differ only in
the first continue transition. A gate resets the lane and erases
that difference.}
\label{fig:mechanism}
\end{figure}

\subsection{A common logger with three memory labels}
At stage 1, the logger selects each action with probability \(1/3\).
For \(2\le h\le H\), define a label \(m_h\in\{\never,0,1\}\) from the
previous actions. The label is \(\never\) if no reset has occurred; otherwise
it is the value of the most recent reset. For \(h<H\), use the following action rule:
\begin{center}
\begin{tabular}{lccc}
\toprule
Logger memory & Hold \(\hold\) & Reset zero \(\resetzero\) & Reset one \(\resetone\)\\
\midrule
Never reset, \(\never\) & \(2/3\) & \(1/6\) & \(1/6\)\\
Last reset zero, \(0\) & \(1/6\) & \(2/3\) & \(1/6\)\\
Last reset one, \(1\) & \(1/6\) & \(1/6\) & \(2/3\)\\
\bottomrule
\end{tabular}
\end{center}
At stage \(H\), choose an independent uniform action. This rule ignores
observations and is defined at every history. It has no dependence on
\(\theta\) or on the unobserved state. Thus the same known logger is valid
in both models and \(C_A=6\).

The target always chooses \(\hold\). Its bit remains \(\theta\) from stage
2 onward, so \(\J_{M_\theta}(\pe)=1/4+\theta/2\).

\paragraph{Why the target needs rare histories.}
After the first gate, the last gate determines the lane in either
model. The remaining observations then have the same distribution
under both candidates. Thus a logged episode is informative about
the unknown first transition only if it continues at every step.
That event has probability
\begin{equation}
 p_H=\frac13\left(\frac23\right)^{H-2}.
 \label{eq:intuitivep}
\end{equation}
The target follows this informative path in every episode.

\section{Setup and coverage conditions}
\label{sec:setup}
We consider a layered horizon-$H$ POMDP with finite state sets
$\mathcal S_h$, observation sets $\mathcal O_h$, action set $\mathcal A$,
transition kernels $T_h$, and emission kernels $O_h$.
At stage \(h\), a hidden state \(s_h\in\mathcal S_h\) emits
\(o_h\sim O_h(\cdot\mid s_h)\), the learner receives the known reward
\(R(o_h)\in[0,1]\), and then chooses \(a_h\in\mathcal A\). For \(h<H\),
the next state follows \(T_h(\cdot\mid s_h,a_h)\).
There is no transition after the final action. Alphabets are finite and may be
time-tagged. Let
\[
\tau_{h-1}=(o_1,a_1,\ldots,o_{h-1},a_{h-1}),\qquad \tau_0=\varnothing.
\]
An unconfounded policy chooses \(a_h\) according to
\(\pi_h(\cdot\mid\tau_{h-1},o_h)\); it has access only to observable history.
A memoryless policy depends only on \(h,o_h\). Write
\(\J_M(\pi)=\E_M^\pi[\sum_{h=1}^H R(o_h)]\).

The learner is given a finite candidate class \(\calM\), common known behavior
and target policies \(\pb,\pe\), and \(n\) independent complete trajectories
from \(\pb\) in an unknown \(M\in\calM\). Candidate models share their spaces
and reward function. The learner can inspect and simulate any candidate,
and can query \(\pe\) on arbitrary histories, but receives no additional
samples from the unknown environment. An estimator is
\((\epsilon,\delta)\)-accurate if
\begin{equation}
 \sup_{M\in\calM}
 \Pp_M^{\pb}\!\left(\left|\widehat J-\J_M(\pe)\right|>\epsilon\right)
 \le\delta .
 \label{eq:accuracy}
\end{equation}
The probability includes the estimator's randomization.

Action coverage prevents negligible logging probabilities. Belief
coverage asks whether histories span the physical states; outcome
revealing asks whether logged futures distinguish them. These are
properties of each candidate, not privileged information given to
the estimator.

The following matrices use the physical latent state. The belief before the
stage-\(h\) observation is
\[
b_{M,h}(\tau_{h-1})
   =\bigl(\Pp_M(s_h=s\mid\tau_{h-1})\bigr)_{s\in\mathcal S_h}.
\]
The future is
\(F_h=(o_h,a_h,\ldots,o_{H-1},a_{H-1},o_H)\).
Define the outcome matrix and state prior by
\begin{equation}
 U_{M,h}(f,s)=\Pp_M^{\pb}(F_h=f\mid s_h=s),
 \qquad p_{M,h}(s)=\Pp_M^{\pb}(s_h=s).
 \label{eq:outcome}
\end{equation}
\textbf{The conditioning matters.}
Equation~\eqref{eq:outcome} averages the logger's past memory
conditional on $s_h$; it does not fix a history. This is the
\emph{behavior-marginal} definition. State priors are positive in our
construction, and zero-probability outcome rows are omitted.

\begin{definition}[Coverage and revealing constants]
\label{def:coverage}
For \(h\in[H-1]\), let
\begin{align}
 \Sigma_{M,h}
 &=\E_M^{\pb}[b_{M,h}(\tau_{h-1})b_{M,h}(\tau_{h-1})^\top], \label{eq:beliefgram}\\
 G_{M,h}
 &=U_{M,h}^\top\diag(U_{M,h}\one)^{-1}U_{M,h}, \label{eq:uniformgram}\\
 K_{M,h}
 &=\diag(p_{M,h})U_{M,h}^\top
       \diag(U_{M,h}p_{M,h})^{-1}U_{M,h}. \label{eq:weightedgram}
\end{align}
Uniform action coverage, belief coverage, uniform-prior outcome revealing,
and prior-weighted outcome revealing hold with constants
\(C_A,C_H,C_F,\widetilde C_F\), respectively, if
\begin{equation}
 \pi_{b,h}(a\mid\tau,o)\ge C_A^{-1}
       \quad\text{at every stage and history},
 \label{eq:actioncover}
\end{equation}
\begin{equation}
 \lambda_{\min}(\Sigma_{M,h})\ge C_H^{-1},\qquad
 \|G_{M,h}^{-1}\|_1\le C_F,\qquad
 \|K_{M,h}^{-1}\|_1\le\widetilde C_F
 \label{eq:matrixcover}
\end{equation}
for every required \(M,h\). Here
\(\|B\|_1=\max_j\sum_i|B_{ij}|\) is the induced matrix 1-norm.
\end{definition}

Equations \eqref{eq:beliefgram}--\eqref{eq:uniformgram} match
Assumptions C--D of \citet{zhang2025};
\eqref{eq:weightedgram} is the alternative in their Appendix G.2,
Assumption F. We use their stage range \(h<H\).
The displayed bounds supplied to the learner are common to both
candidates. At $h=H$, uniform-prior revealing also holds with constant
four (Remark~\ref{rem:terminal}).

Write \([m]=\{1,\ldots,m\}\) for a positive integer \(m\).
All logarithms are natural. For \(x,y\in(0,1)\), let
\(\kl(x,y)=x\log(x/y)+(1-x)\log((1-x)/(1-y))\).
For discrete laws \(P,Q\), write
\(\Aff(P,Q)=\sum_x\sqrt{P(x)Q(x)}\) and
\(\TV(P,Q)=\tfrac12\sum_x|P(x)-Q(x)|\).

\section{Main result}
\label{sec:main}
\begin{theorem}[Exponential offline evaluation complexity]
\label{thm:main}
For every integer \(H\ge3\), there is a candidate class
\(\calM_H=\{M_0,M_1\}\) and common known policies \(\pb,\pe\) such that:
\begin{enumerate}[label=(\roman*)]
 \item \(|\mathcal S_1|=1\), \(|\mathcal S_h|=2\) for \(h\ge2\),
       and \(|\mathcal A|=3\). Each nonterminal observation alphabet is a
       singleton and \(|\mathcal O_H|=2\). Total return lies in \([0,1]\).
 \item The logger is unconfounded, uses three memory labels,
       and assigns probability at least \(1/6\) to every action.
       The target is deterministic and memoryless.
 \item Both models satisfy Definition~\ref{def:coverage} with
 \begin{equation}
 C_A=6,\qquad C_H=3,\qquad C_F=35/9,\qquad \widetilde C_F=9.
 \label{eq:constants}
 \end{equation}
 \item \(\J_{M_0}(\pe)=1/4\) and \(\J_{M_1}(\pe)=3/4\).
       Every \((1/8,\delta)\)-accurate estimator, \(0<\delta<1/2\), requires
 \begin{equation}
 n\ge
 \frac{2\kl(1-\delta,\delta)}{p_H\log3},
 \qquad p_H=\frac13\left(\frac23\right)^{H-2}.
 \label{eq:mainlower}
 \end{equation}
\end{enumerate}
\end{theorem}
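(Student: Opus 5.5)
Items (i), (ii) and the target values in (iv) follow by direct inspection of the construction in Section~\ref{sec:construction}, so the plan concentrates on the lower bound \eqref{eq:mainlower} and the constants in (iii). Both models use the same singleton nonterminal alphabets and terminal channel \eqref{eq:emission}. The only return is \(Y\in\{0,1\}\). The logger table gives minimum probability \(1/6\), and the logger depends only on its memory label. Under the target, \(s_H=\theta\), so \(\J_{M_\theta}(\pe)=\Pp(Y=1\mid s_H=\theta)=1/4+\theta/2\).

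For the lower bound I reduce to two-point testing. Since \(|\J_{M_1}-\J_{M_0}|=1/2>2\cdot(1/8)\), any \((1/8,\delta)\)-accurate estimator yields a test \(\psi=\one\{\widehat J>1/2\}\) with error at most \(\delta\) under each model. The data-processing inequality for KL applied to \(\psi\), together with monotonicity of \(\kl\), gives
\[
n\,\KL(P_0^{\pb}\|P_1^{\pb})\ge\kl(1-\delta,\delta)
\]
(and symmetrically). It therefore suffices to show that the per-episode KL is at most \(p_H\log 3/2\).

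I compute this by conditioning on the action sequence. Actions have the same law under both models, because the logger ignores observations, which are in any case blank. On any episode containing a reset at some stage \(h\le H-1\), including \(a_1\in\{\resetzero,\resetone\}\), the state \(s_H\) is a deterministic function of the actions common to both models. Hence the conditional law of \(Y\) agrees and contributes zero KL. On the all-hold path through stage \(H-1\), which has probability \(\frac13(\frac23)^{H-2}=p_H\) (the final action is irrelevant), \(s_H=\theta\) and \(Y\sim\Ber(1/4)\) versus \(\Ber(3/4)\). Thus
\[
\KL=p_H\,\kl(1/4,3/4)=p_H\cdot\tfrac12\log3,
\]
giving \(n\ge 2\kl(1-\delta,\delta)/(p_H\log3)\), exactly \eqref{eq:mainlower}.

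The main obstacle is (iii): verifying the belief and revealing constants uniformly in \(h\) and in \(\theta\). I would work with the joint law of \((m_h,s_h)\). When \(m_h\in\{0,1\}\), the state equals \(m_h\). When \(m_h=\never\), the state equals \(\theta\). So the belief \(b_{M,h}\) is a standard basis vector, and \(\Sigma_{M,h}=\diag(p_{M,h})\). Its minimum eigenvalue is \(\min_s p_{M,h}(s)\), and I expect this to be at least \(1/3\) by the symmetric \(0/1\) reset structure; stage \(1\) is trivial since there \(\Sigma=1\). For the outcome matrices, I would note that conditioned on \(s_h=s\), the future is determined by the mixture over memory labels \(\never\) and \(s\) (with weight on the wrong label \(1-s\) being zero except through \(\never\)). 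The key statistic is \(Y\) together with the future actions. Grouping futures by "no further reset" versus "last reset \(r\)", each \(U_{M,h}\) reduces to a small matrix whose entries are mixtures of \(\Ber(1/4)\) and \(\Ber(3/4)\) with action probabilities. I would compute \(G\) and \(K\) in closed form as \(2\times2\) matrices, invert them, and take the \(1\)-norm. The claim is that the worst case over \(h\) and \(\theta\) is attained at a boundary (\(h=H-1\), or the limit \(h\) small with large horizon) and gives \(35/9\) and \(9\). This bookkeeping, especially showing that the \(h\)-dependent mixture weights never make the bound worse than the extreme case, is where I expect the real work to lie.
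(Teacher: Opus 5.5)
Your arguments for (i), (ii), the target values, and the lower bound \eqref{eq:mainlower} are correct and follow the paper: threshold $\widehat J$ at $1/2$, use data processing to get $n\,\KL(P_0\Vert P_1)\ge\kl(1-\delta,\delta)$, and compute the per-episode divergence $p_H\kl(1/4,3/4)=\tfrac{p_H}{2}\log3$ by conditioning on the common action law. Belief coverage is also on track, since $\Sigma_{M,h}=\diag(p_{M,h})$. You still need the short induction showing $\Pp(m_h=\never)=w_h\le1/3$, with the two reset labels sharing the rest equally. This gives state $1-\theta$ mass $(1-w_h)/2\ge1/3$.

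The genuine gap is in the two revealing constants. Grouping futures by ``no further reset'' versus ``last reset $r$'' discards the part of the future that carries information about $s_h$, namely the logger's early actions, above all $a_h$. Within the reset labels the memory flips with probability $1/6$ per step. So the law of the last reset value differs between the two states by only $O((2/3)^{H-h})$, and the no-reset cells also have mass only $O((2/3)^{H-h})$. The nontrivial eigenvalue of that coarsened Gram matrix is at most the total variation between its columns. It therefore tends to zero as $H-h$ grows, and no horizon-free constant can come out of that computation. A lossless compression would record the \emph{first} future reset, not the last. Also, $35/9$ and $9$ are not the exact worst cases; the paper's diagnostics report worst-stage norms near $3.09$ and $4.22$ at $H=24$. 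Trying to show that a boundary extremum ``gives'' these values aims at a false target; only upper bounds are needed.

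The missing idea is a one-step binary coarsening that is unaffected by the $h$-dependent memory mixture. Let $E_h$ be the event that $a_h$ resets to $1-\theta$. If $s_h=1-\theta$ then $m_h=1-\theta$, and if $s_h=\theta$ then $m_h\in\{\never,\theta\}$. The logger table therefore gives $q=\Pp(E_h\mid s_h=\theta)=1/6$ and $r=\Pp(E_h\mid s_h=1-\theta)=2/3$ for every $2\le h<H$.

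For $G$: weighted Cauchy--Schwarz within each part shows that merging rows of $U$ can only decrease $U^\top\diag(U\one)^{-1}U$ in the positive semidefinite order. Both $G_{M,h}$ and its coarsening are symmetric stochastic $2\times2$ matrices with eigenvector $(1,-1)^\top$. Hence the nontrivial eigenvalue satisfies $\lambda\ge(q-r)^2/((q+r)(2-q-r))=9/35$, and $\|G_{M,h}^{-1}\|_1=\lambda^{-1}\le35/9$.

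For $K$: the matrix $K_{M,h}$ is column-stochastic, and its nontrivial eigenvalue is $\lambda_F=\Var(\E[X\mid F_h])/(\rho(1-\rho))$, where $X=\one\{s_h=1-\theta\}$ and $\rho=\Pp(X=1)\in[1/3,2/3]$. The law of total variance lets you replace $F_h$ by $E_h$. With $\mu=(1-\rho)q+\rho r$, this gives $\lambda_F\ge\rho(1-\rho)(r-q)^2/(\mu(1-\mu))\ge2/9$. Inverting a $2\times2$ column-stochastic matrix then yields $\|K_{M,h}^{-1}\|_1\le2/\lambda_F\le9$. Stage $1$ is trivial for all three matrices.
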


The theorem permits unrestricted computation and arbitrary queries to the
supplied target policy. For fixed \(\delta<1/2\), the lower bound is exponential in \(H\), although
there are only \(2H-1\) latent states across all layers. The constant-accuracy
choice in the theorem already rules out any uniform sample guarantee
polynomial in the horizon, model-class description, inverse accuracy,
and the displayed coverage constants.

\begin{corollary}[Matching dependence on horizon and confidence]
\label{cor:rate}
Let \(N_H(\delta)\) be the smallest integer \(n\) for which a
\((1/8,\delta)\)-accurate estimator exists for the class in
Theorem~\ref{thm:main}. With \(c_\star=1-\sqrt3/2\),
\begin{equation}
 \frac{2\kl(1-\delta,\delta)}{p_H\log3}
 \le N_H(\delta)
 \le \left\lceil
    \frac{\log(1/(2\delta))}{-\log(1-c_\star p_H)}
       \right\rceil,\qquad 0<\delta<1/2 .
 \label{eq:sandwich}
\end{equation}
Consequently,
\(N_H(\delta)=\Theta((3/2)^H\log(1/\delta))\) uniformly over
\(H\ge3\) and \(0<\delta\le1/4\).
\end{corollary}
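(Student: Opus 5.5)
The lower inequality in \eqref{eq:sandwich} is exactly \eqref{eq:mainlower} of Theorem~\ref{thm:main}, so the work is the upper bound plus the asymptotic comparison. For the upper bound I would first reduce the data to a simpler statistical experiment. Each episode's law factorizes into the behavior-policy action probabilities, which are common to both models, and the terminal emission given $s_H$. If some $a_h\in\{\resetzero,\resetone\}$ for $h\le H-1$, then $s_H$ equals the value of the last reset in both models. Hence such an episode has the same law under $M_0$ and $M_1$. On the complementary event $E=\{a_1=\cdots=a_{H-1}=\hold\}$ we have $s_H=\theta$. The event $E$ has probability $p_H$ under the logger in both models, and given $E$ we have $Y\sim\Ber(1/4+\theta/2)$. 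So for $P_\theta$ the single-episode law under $M_\theta$, the likelihood ratio $P_1/P_0$ equals $1$ off $E$ and equals $3^{\pm1}$ on $E$ according to $Y=1$ or $Y=0$.

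The per-episode affinity is then
\[
\Aff(P_0,P_1)=(1-p_H)+p_H\cdot 2\sqrt{\tfrac14\cdot\tfrac34}=1-c_\star p_H ,
\]
and by the product structure $\Aff(P_0^{\otimes n},P_1^{\otimes n})=(1-c_\star p_H)^n$.

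As the estimator I would take the symmetric likelihood-ratio test. Let $S$ be the number of informative episodes with $Y=1$ minus the number with $Y=0$. Set $\hat\theta=1$ if $S>0$ and $\hat\theta=0$ if $S<0$, and break ties by a fair coin. Then output $\widehat J=1/4+\hat\theta/2$, which is exact whenever $\hat\theta=\theta$.

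The key step is bounding the worst-case error by $\tfrac12\Aff^n$ rather than $\Aff^n$. I would use the symmetry that flipping every informative $Y$ maps $P_0^{\otimes n}$ to $P_1^{\otimes n}$. This gives $P_0(S>0)=P_1(S<0)=:A$ and $P_0(S=0)=P_1(S=0)=:z$, so both error probabilities equal $A+z/2$. Splitting $\sum_x\sqrt{P_0^{\otimes n}(x)P_1^{\otimes n}(x)}$ over the three regions uses three facts. On $\{S>0\}$ we have $P_1\ge P_0$, so $\sqrt{P_0P_1}\ge P_0$. On $\{S<0\}$ we have $\sqrt{P_0P_1}\ge P_1$. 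On $\{S=0\}$ the two laws are equal. Together these give $\Aff^n\ge 2A+z$, so the error is at most $\tfrac12(1-c_\star p_H)^n$. This is at most $\delta$ once $n\ge\log(1/(2\delta))/(-\log(1-c_\star p_H))$, which yields the right inequality in \eqref{eq:sandwich}.

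For the $\Theta$ statement, take $0<\delta\le1/4$ and recall $p_H^{-1}=3(3/2)^{H-2}$.

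For the lower side, $\kl(1-\delta,\delta)=(1-2\delta)\log\frac{1-\delta}{\delta}$. This is at least $\tfrac12\log\frac{1-\delta}{\delta}\ge\tfrac12\log\frac{1}{\delta}+\tfrac12\log\frac34$. Since $\log(1/\delta)\ge\log 4$, this is at least a constant multiple of $\log(1/\delta)$.

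For the upper side, use $-\log(1-x)\ge x$ and $\log(1/(2\delta))\le\log(1/\delta)$. Also $\log(1/(2\delta))\ge\log2$, and $c_\star p_H<1$, so the quantity inside the ceiling is at least $\log 2$. The ceiling therefore costs at most a constant factor. These constants do not depend on $H$ or $\delta$, which gives uniformity.

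I expect the main obstacle to be the factor-$\tfrac12$ affinity bound for the maximum rather than the average error. The symmetric tie-breaking argument above is the route I would take to get it.
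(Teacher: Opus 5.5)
Your proposal is correct and follows essentially the paper's route. The paper also takes the lower bound from Theorem~\ref{thm:main}, and it gets the upper bound from the fair-tie likelihood-ratio test with error $\tfrac12\sum_w\min\{Q_0^{\otimes n}(w),Q_1^{\otimes n}(w)\}\le\tfrac12\Aff(Q_0,Q_1)^n=\tfrac12(1-c_\star p_H)^n$; your symmetry-plus-region-splitting argument gives the same bound, just computed on full trajectories instead of through the three-symbol reduction. It then uses the same $\kl$ estimates and $-\log(1-x)\ge x$ for the $\Theta$ statement.

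One small fix. To conclude that the quantity inside the ceiling is at least $\log 2$, you need $-\log(1-c_\star p_H)\le1$, not merely $c_\star p_H<1$; this holds because $c_\star p_H\le c_\star/3$. Alternatively, the extra $+1$ from the ceiling is absorbed directly, since $(3/2)^H\log(1/\delta)\ge(27/8)\log4$.
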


Section~\ref{sec:exact} gives an estimator attaining the upper bound.
Appendix~\ref{sec:proof} proves every coverage claim and the lower bound.

\subsection{Proof outline}
\textbf{Coverage.} The two reset-memory labels have equal probability,
so every physical state has mass at least $1/3$. In each fixed candidate,
the action history determines the lane, making its belief matrix
diagonal. A suitable next gate has conditional probability $1/6$ in
one physical state and $2/3$ in the other. This separation gives both
revealing bounds (Appendix~\ref{sec:proof}).

\textbf{Information.} Every episode containing a gate has the same
observable law under both candidates. On the remaining fraction $p_H$,
the terminal bit is $\Ber(1/4)$ or $\Ber(3/4)$, so the trajectory laws
satisfy $\KL(P_0\Vert P_1)=p_H\log(3)/2$. An accurate value estimate
distinguishes the target values $1/4$ and $3/4$. Binary testing and
independence therefore imply $n p_H\log(3)/2\ge\kl(1-\delta,\delta)$.
The familiar testing reduction is the final step; the construction
keeps all coverage constants bounded while making $p_H$ exponentially
small.

\section{The optimal estimator and its exact error}
\label{sec:exact}
There is no optimization problem hidden in this example. Discard
episodes that pass through a gate. Among the remaining episodes,
count terminal rewards equal to one and zero. Select $M_1$ if ones
are more frequent, $M_0$ if zeros are more frequent, and flip a fair
coin at a tie. Return the selected model's target value.

Encode a reset episode by an erasure $\bot$ and a no-reset episode
by its terminal bit. This compression loses no information about the
model. Complete proofs are in Appendix~\ref{app:exactproof}.

\begin{proposition}[Three-symbol reduction]
\label{prop:experiment}
Map a complete trajectory to \(W=\bot\) if a reset occurs before the
terminal observation, and to \(W=Y\) otherwise. In the coordinate order
\((\bot,0,1)\), its distributions are
\begin{equation}
 Q_0=(1-p_H,3p_H/4,p_H/4),\qquad
 Q_1=(1-p_H,p_H/4,3p_H/4).
 \label{eq:ternary}
\end{equation}
There exists a reconstruction kernel, common to both candidates, that
generates the full trajectory conditional on \(W\). Consequently the
full-data and three-symbol experiments have the same optimal testing risk.
\end{proposition}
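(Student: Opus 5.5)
I will prove Proposition~\ref{prop:experiment} by writing the law of a full trajectory under $M_\theta$ as an explicit product and reading off the factorization. The trajectory is $(a_1,\ldots,a_H,Y)$, since every nonterminal observation is a fixed blank symbol. Under $\pb$ the action sequence $a_{1:H}$ has the same law in both models, because the logger ignores observations and depends only on past actions. Write $\rho(a_{1:H})$ for this common law.

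Conditional on $a_{1:H}$, only the terminal lane $s_H$ matters, through the channel \eqref{eq:emission}. By \eqref{eq:first}--\eqref{eq:later}, $s_H$ is a deterministic function of the actions and $\theta$. If some $a_t\in\{\resetzero,\resetone\}$ with $t<H$, then $s_H$ equals the value of the last such gate, whatever $\theta$ is. Otherwise $a_1=\cdots=a_{H-1}=\hold$ and $s_H=\theta$. Hence
\[
\Pp_\theta(a_{1:H},Y)=\rho(a_{1:H})\,q_{g(a_{1:H},\theta)}(Y),
\]
where $q_s$ is the law of $Y$ given $s_H=s$, and $g$ does not depend on $\theta$ on the reset event. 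The no-reset event has probability
\[
\Pp(a_1=\hold)\,\Pp(\hold\mid\never)^{H-2}=\tfrac13\bigl(\tfrac23\bigr)^{H-2}=p_H
\]
under both models. The final action $a_H$ is unconstrained and uniform. Summing over actions then gives \eqref{eq:ternary}: $\Pp(W=\bot)=1-p_H$, and on the complement $Y\sim\Ber(1/4+\theta/2)$.

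For the reconstruction kernel, given $W$ I define $\kappa(\cdot\mid W)$ as follows. If $W=\bot$, draw $(a_{1:H},Y)$ from $\rho(a_{1:H})\,q_{g(a_{1:H})}(Y)$ restricted to the reset event and normalized by $1-p_H$. This law is free of $\theta$ by the previous paragraph. If $W=y\in\{0,1\}$, set $a_1=\cdots=a_{H-1}=\hold$, draw $a_H$ uniformly, and set $Y=y$. Checking $\sum_w Q_\theta(w)\kappa(\tau\mid w)=\Pp_\theta(\tau)$ is then immediate in each case. This exhibits $W$ as a sufficient statistic.

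The risk equivalence follows from two facts. Any test based on $W^n$ is a test on the full data, since $W$ is a deterministic function of the trajectory. Conversely, any full-data test $\phi$ can be simulated from $W^n$ by applying $\kappa$ independently to each coordinate and then applying $\phi$. This randomized test has the same error probabilities under each $\theta$, because the product of the $\kappa$'s maps $Q_\theta^{\otimes n}$ exactly to $P_\theta^{\otimes n}$.

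The main point needing care is that the reset-event conditional law is genuinely $\theta$-free. The subtlety is the terminal action: gates at stage $H$ have no effect, so the reset event must be defined over stages $1,\ldots,H-1$. With that definition, $g$ is determined by actions alone and the argument goes through.
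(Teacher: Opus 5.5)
Your proposal is correct and follows essentially the same route as the paper. On the reset event the two trajectory likelihoods coincide and share the normalizer $1-p_H$, so the conditional law there is $\theta$-free. On the all-hold event the trajectory is fixed up to the common independent uniform final action, and the risk equality comes from mapping to $W$ and reconstructing back; you simply make the factorization $\rho(a_{1:H})\,q_{g}(Y)$ and the mixture identity explicit where the paper leaves them to the construction.
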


\begin{theorem}[Finite-sample minimax error]
\label{thm:minimax}
Let \(X_i\) be \(0,-1,+1\) when \(W_i\) equals \(\bot,0,1\), respectively,
and let \(S_n=\sum_{i=1}^n X_i\).
The optimal worst-case failure probability for estimating
\(\J_{M_\theta}(\pe)\) to accuracy \(1/8\) from \(n\) trajectories is
\begin{equation}
 e_n=\Pp_0(S_n>0)+\frac12\Pp_0(S_n=0).
 \label{eq:exactrisk}
\end{equation}
Moreover,
\begin{equation}
 \frac12(1-p_H)^n
 \le e_n\le
 \frac12\left[1-(1-\sqrt3/2)p_H\right]^n .
 \label{eq:riskbounds}
\end{equation}
\end{theorem}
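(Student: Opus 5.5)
We reduce to a two-point Bayesian testing problem using Proposition~\ref{prop:experiment}. The target values $1/4$ and $3/4$ are separated by $1/2$. Any estimate within $1/8$ of one value is therefore more than $1/8$ from the other. So an estimator with failure probability at most $e$ under both models yields a test of $\theta$ with both error probabilities at most $e$. Conversely, a test $\hat\theta$ yields the estimator $\J_{M_{\hat\theta}}(\pe)$ with the same errors. The optimal worst-case failure probability therefore equals the minimax testing risk $\inf_{\phi}\max_\theta \Pp_\theta(\phi\ne\theta)$. By Proposition~\ref{prop:experiment}, this risk can be computed in the experiment with $n$ i.i.d.\ draws of $W$, that is, with $Q_0^{\otimes n}$ versus $Q_1^{\otimes n}$.

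In the ternary experiment the likelihood ratio is
\[
\frac{dQ_1^{\otimes n}}{dQ_0^{\otimes n}}=3^{\sum_i X_i}=3^{S_n},
\]
since each $\bot$ contributes ratio $1$, each $0$ contributes $1/3$, and each $1$ contributes $3$. The map $(x_1,\dots,x_n)\mapsto(-x_1,\dots,-x_n)$ interchanges $Q_0^{\otimes n}$ and $Q_1^{\otimes n}$, so the problem is symmetric. The minimax risk then equals the Bayes risk under the uniform prior, and it is attained by the symmetric likelihood-ratio test with fair randomization at ties. The standard argument runs as follows.
\begin{itemize}
\item For any test, the maximum of the two errors is at least their average.
\item The average error is minimized by the Neyman--Pearson/Bayes rule: decide $1$ when $S_n>0$, decide $0$ when $S_n<0$, and randomize arbitrarily when $S_n=0$.
\item Fair randomization at ties makes the two errors equal by the symmetry above, so this rule also attains the maximum.
\end{itemize}
Its error under $\theta=0$ is $\Pp_0(S_n>0)+\tfrac12\Pp_0(S_n=0)$, which is \eqref{eq:exactrisk}.

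For the bounds in \eqref{eq:riskbounds}, write the Bayes risk as $e_n=\tfrac12\sum_w\min(Q_0^{\otimes n}(w),Q_1^{\otimes n}(w))=\tfrac12\bigl(1-\TV(Q_0^{\otimes n},Q_1^{\otimes n})\bigr)$.

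\emph{Lower bound.} The all-erasure sequence has probability $(1-p_H)^n$ under both laws, so the sum of minima is at least that mass. This gives $e_n\ge\tfrac12(1-p_H)^n$.

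\emph{Upper bound.} Use $\min(a,b)\le\sqrt{ab}$ and tensorization of the affinity:
\[
e_n\le\tfrac12\Aff(Q_0,Q_1)^n,\qquad \Aff(Q_0,Q_1)=1-p_H+2p_H\sqrt{3/16}=1-(1-\sqrt3/2)p_H .
\]

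There is no serious obstacle. The only delicate step is justifying that the minimax risk over estimators equals the symmetric Bayes testing risk. This needs the reduction in both directions: an estimator can be converted into a test with at most its error, and the symmetric test is realized as an estimator outputting the corresponding target value. It also needs Proposition~\ref{prop:experiment}, so that restricting to $W$ loses nothing, since the reconstruction kernel lets any full-data procedure be simulated from $W$.
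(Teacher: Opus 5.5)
Your proposal is correct and follows the paper's proof essentially step for step: the two-way reduction between estimation and testing, the likelihood ratio $3^{S_n}$ with the fair-tie test made minimax by the sign-flip symmetry, the all-erasure event for the lower bound, and $\min\{a,b\}\le\sqrt{ab}$ with the tensorized affinity $1-(1-\sqrt3/2)p_H$ for the upper bound. Your explicit appeal to Proposition~\ref{prop:experiment} to justify working with $W$ alone is the same sufficiency step the paper relies on implicitly.
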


The likelihood ratio is $3^{S_n}$. Symmetry makes the fair-tie test
minimax, and returning the selected target value equates testing
and estimation risk. Thus the exponential cost persists for an
explicit optimal estimator, independent of representation or optimization.

\section{Gridworld experiments}
\label{sec:experiments}
We simulate the construction action by action at
$H\in\{4,8,12,16,20,24\}$, using 256 independent datasets per candidate
and 512 per budget. The estimator receives observable evidence only;
we do not sample from the reduced three-symbol law. Each gate counts
as one transition in this custom directed grid. The experiment is an
exact spatial realization of the theorem, not a conventional four-neighbor
navigation benchmark.

Budgets are rounded values $n=c/p_H$ for
$c\in\{1/8,1/4,1/2,1,2,4,8,12\}$, plus the analytically chosen
$N_H(0.1)$. Dataset prefixes are shared across budgets; replicates
and candidates use independent random streams. We measure failure
at accuracy $1/8$. The optimal test's two model-specific errors are
equal, so pooled failures estimate its worst-case risk. Intervals are
pointwise 95\% Wilson intervals over 512 datasets, not simultaneous
bands. The full protocol and supplementary results appear in
Appendix~\ref{app:experiment_details}.

\begin{figure}[!htbp]
\centering
\includegraphics[width=\linewidth]{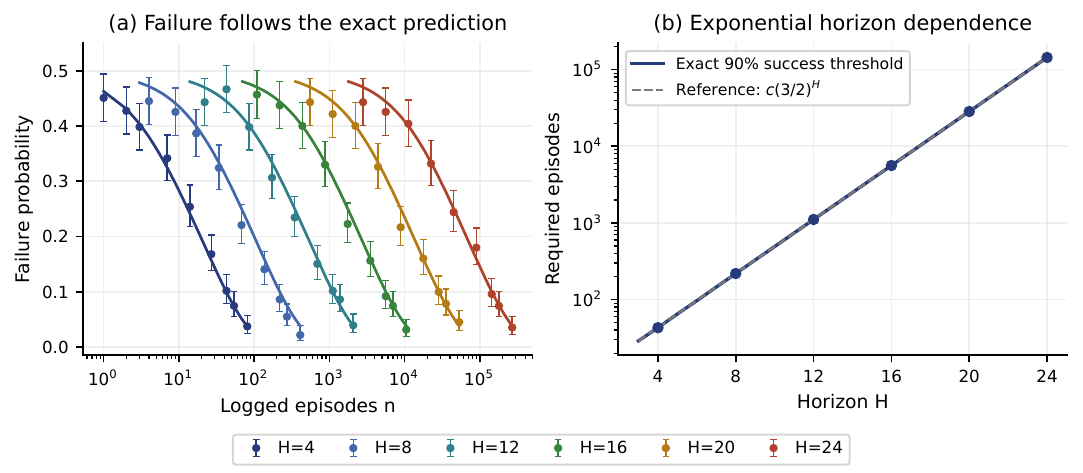}
\caption{\textbf{Optimal evaluation remains expensive.} (a) Exact risk
curves and failures from 512 simulated datasets per budget, with
pointwise 95\% Wilson intervals. (b) The exact 90\%-success threshold;
the dashed $(3/2)^H$ reference is normalized at $H=12$, not fitted
to empirical measurements.}
\label{fig:horizon}
\end{figure}

\paragraph{Results.}
The simulated errors follow the predicted horizon dependence
(Figure~\ref{fig:horizon}). At the six analytic 90\%-success thresholds,
observed failures range from 8.6\% to 10.2\%, and every interval
contains its analytic prediction (Table~\ref{tab:thresholds}).
The threshold grows from 43 episodes at $H=4$ to 143,982 at $H=24$;
it is computed from the exact risk, not fitted to simulation data.
Population diagnostics give $C_A=6$ and $C_H=3$ throughout. At $H=24$,
the actual worst-stage norms are $C_F\approx3.094$ and
$\widetilde C_F\approx4.218$, both below their proved bounds
(Figure~\ref{fig:controls}).

\paragraph{Information controls.}
We repeat the experiment at $H\in\{4,12,24\}$ after either collecting
fresh target-policy episodes or revealing the lane at stage 2 while
keeping the logger. In the first control every episode is informative;
in the second a first continue identifies the model exactly, giving
risk $\tfrac12(2/3)^n$. Their analytic 90\%-success budgets are seven
and four episodes, respectively, independent of $H$. Each episode
still costs order $H$ steps. These controls change the information
available and therefore fall outside the original lower bound.
The experiment illustrates the specified worst-case family, not
typical navigation tasks or a general ranking of learned OPE methods.

\begin{table}[!htbp]
\centering
\small
\input{threshold_table.tex}
\caption{Simulation at the analytically chosen 90\%-success budget.
Each row uses 256 datasets per candidate. Intervals concern the failure
probability at the stated budget; they are not intervals for $N_H(0.1)$.}
\label{tab:thresholds}
\end{table}

\begin{figure}[H]
\centering
\includegraphics[width=\linewidth]{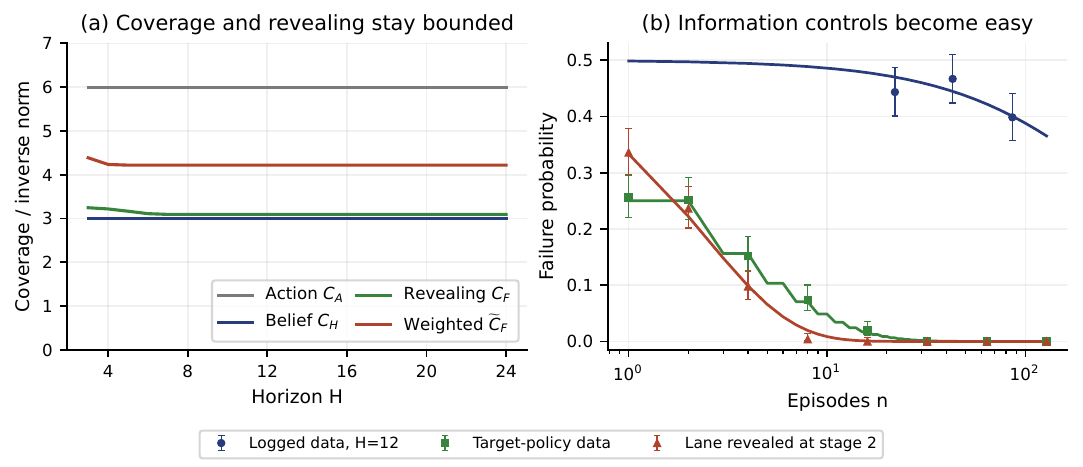}
\caption{\textbf{The obstacle is missing information about the target's
path.} (a) Population constants, maximized over both candidates and
$h<H$, stay bounded. (b) Exact curves and simulated failure probabilities
for the original logger and two information controls at $H=12$.
Every marker uses 512 datasets with a pointwise 95\% Wilson interval.
Zero observed failures do not imply zero true failure probability.}
\label{fig:controls}
\end{figure}

\section{Why current-state information is insufficient}
\label{sec:interpretation}
A history-dependent logger can behave differently after histories
that end in the same physical state. In $M_0$, a first continue and
a first upper gate both place the agent in lane zero, but the next-action
laws are $(2/3,1/6,1/6)$ and $(1/6,2/3,1/6)$. Therefore a single
state-conditioned future matrix cannot reproduce the logger's future
law after both histories:
\[
 \Pp^{\pb}(F_h=\cdot\mid\tau_{h-1})
 \ne U_{M,h}b_{M,h}(\tau_{h-1})
 \quad\text{for some histories}.
\]
The revealing assumption certifies a state--future correlation under
the logger's marginal law. It does not make the physical state
sufficient for the logger's continuation.

Augmenting the state with logger memory restores a memoryless
representation but exposes the rare never-reset state. Its belief
second-moment eigenvalue is $w_h=\tfrac13(2/3)^{h-2}$, so the coverage
cost is at least $1/w_h$. A constant number of memory labels also does
not give a uniformly short history window: arbitrarily long common
hold suffixes can retain different states or different logger laws.
These observations explain why positive results with forgetting and
additional coverage conditions, such as \citet{zhu2026}, need not
apply. Formal statements and proofs are in Appendix~\ref{app:memory}.

\section{Discussion}
Our construction separates state coverage from the information needed
to evaluate a target policy. Exponential sample complexity persists
for an optimal estimator over two known models. Positive guarantees
need additional structure tying the logger's continuation to the actual
history, through joint state coverage, forgetting, or informative
controlled futures. The theorem rules out the specified
behavior-marginal guarantees, without characterizing all tractable
POMDPs.
\clearpage
\section*{Acknowledgments}
We acknowledge the use of ChatGPT (OpenAI) in developing the theoretical
arguments and experiments and in drafting and editing the manuscript.
The proofs of the main theorem, the finite-sample minimax characterization,
and the matching sample-complexity bounds have been formalized and
verified in Lean~4. The formalization is available at
\url{https://github.com/pranayajajoo/pomdp-logging-hardness}.

\bibliographystyle{plainnat}
\bibliography{references}
\appendix
\clearpage
\section{Complete proof of the lower bound}
\label{sec:proof}
\subsection{Uniform belief coverage}
\begin{lemma}[State and logger-memory distributions]
\label{lem:belief}
For either model and every \(2\le h\le H\), put
\(w_h=\tfrac13(2/3)^{h-2}\). Then
\begin{equation}
 \Pp^{\pb}(m_h=\never)=w_h,\qquad
 \Pp^{\pb}(m_h=0)=\Pp^{\pb}(m_h=1)=\frac{1-w_h}{2}.
 \label{eq:memorymass}
\end{equation}
The physical-state belief is a coordinate vector. Up to exchanging
coordinates, its second-moment matrix is
\[
 \Sigma_{M_\theta,h}=
 \diag\left(\frac{1+w_h}{2},\frac{1-w_h}{2}\right).
\]
Hence \(C_H\le3\).
\end{lemma}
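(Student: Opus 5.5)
The proof tracks the logger-memory label \(m_h\), which is a deterministic function of the action history and so has the same law under both models, since the logger ignores observations and \(\theta\). We argue by induction on \(h\). At \(h=2\) the first action is uniform, so \(m_2=\never\) with probability \(1/3=w_2\), and \(m_2=0\), \(m_2=1\) each with probability \(1/3=(1-w_2)/2\). For the step we use the table. From \(\never\), the label stays \(\never\) with probability \(2/3\) and moves to \(0\) or \(1\) with probability \(1/6\) each. From label \(j\in\{0,1\}\), an action \(\hold\) or a repeated gate \(j\) keeps the label, with total probability \(5/6\); the opposite gate switches it, with probability \(1/6\). Hence \(\Pp(m_{h+1}=\never)=\tfrac23 w_h=w_{h+1}\). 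The exchange \(0\leftrightarrow1\) preserves the initial law and the transition rule, so \(\Pp(m_h=0)=\Pp(m_h=1)\) for every \(h\), and therefore both equal \((1-w_h)/2\). No stage-\(H\) action enters \(m_h\) for \(h\le H\), so the uniform terminal rule does not affect the argument.

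Next we show the belief is a coordinate vector. Before the stage-\(h\) observation, the history \(\tau_{h-1}\) contains the actions \(a_1,\dots,a_{h-1}\), and the observations before \(H\) are blank. In a fixed model \(M_\theta\), the lane \(s_h\) is a deterministic function of these actions: it is \(\theta\) if \(m_h=\never\), and it is the value of the last gate otherwise. The belief \(b_{M_\theta,h}(\tau_{h-1})\) is therefore a point mass \(e_{s_h}\), so \(b b^\top=e_{s_h}e_{s_h}^\top\). Taking expectations gives \(\Sigma_{M_\theta,h}=\diag(\Pp(s_h=0),\Pp(s_h=1))\).

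For \(\theta=0\) this is \(\diag(w_h+\tfrac{1-w_h}{2},\tfrac{1-w_h}{2})=\diag(\tfrac{1+w_h}{2},\tfrac{1-w_h}{2})\). For \(\theta=1\) the two coordinates are exchanged. The smallest eigenvalue is \((1-w_h)/2\). Since \(w_h\le w_2=1/3\), this is at least \(1/3\), which gives \(C_H\le3\).

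There is no real obstacle. The step needing the most care is the belief: we must note that \(b\) conditions on \(\tau_{h-1}\), which excludes \(o_h\), but this is irrelevant because \(o_h\) is blank for \(h<H\). The belief is taken within a fixed candidate, where \(\theta\) is known, so it degenerates even on the never-reset histories.
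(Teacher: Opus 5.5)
Your proof is correct and follows the paper's argument essentially step for step. You run an induction on the memory label, using the $0\leftrightarrow1$ symmetry and the fact that $\never$ persists only under $\hold$. You then treat the lane as a deterministic function of the action prefix within a fixed model, which gives a diagonal $\Sigma_{M_\theta,h}$ with smallest entry $(1-w_h)/2\ge1/3$.

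The one thing to add for the conclusion $C_H\le3$ is stage $h=1$, which Definition~\ref{def:coverage} requires. It is trivial: $|\mathcal S_1|=1$, so $\Sigma_{M,1}$ is the scalar one, as the paper notes.
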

\begin{proof}
At stage 2 the three labels each have mass \(1/3\). The never-reset label
persists only after hold, with probability \(2/3\), and can never be
re-entered. The reset-label masses remain equal by symmetry of their
transition rule and the equal inflow from \(\never\). This proves
\eqref{eq:memorymass} by induction.

For a fixed model,
\begin{equation}
 s_h=\theta\quad\text{if }m_h=\never,\qquad
 s_h=m_h\quad\text{otherwise}.
 \label{eq:statefrommemory}
\end{equation}
Both quantities on the right are determined by the action prefix, so the
model-dependent belief is a coordinate vector. The state \(\theta\) has
probability \(w_h+(1-w_h)/2\); the other state has probability
\((1-w_h)/2\). Because \(w_h\le1/3\), both diagonal entries are at least
\(1/3\). At stage 1 the belief matrix is the scalar one.
\end{proof}

The model-dependent posterior in this lemma is used only to establish
coverage; it is not supplied to the learner. Unlike a lower bound whose
progress state is rarely visited, this example has constant marginal mass
on every physical state.

\subsection{Uniform-prior outcome revealing}
\begin{lemma}[Coarsening the outcome matrix]
\label{lem:coarsen}
Let \(U\) be a nonnegative matrix whose columns sum to one. Partition its
rows and form \(\bar U\) by summing rows within each part. Then
\[
 U^\top\diag(U\one)^{-1}U
 \succeq \bar U^\top\diag(\bar U\one)^{-1}\bar U ,
\]
after removing rows with zero total mass.
\end{lemma}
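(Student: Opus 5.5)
The plan is to reduce the matrix inequality to a scalar Cauchy--Schwarz inequality, one part of the partition at a time. Both sides are quadratic forms in a vector \(x\) indexed by the columns (states). Write \(u_f\) for the \(f\)-th row of \(U\), and let \(r_f=u_f\one>0\) denote its row sum, after discarding zero rows. Then
\[
 x^\top U^\top\diag(U\one)^{-1}U\,x=\sum_f\frac{(u_f x)^2}{r_f}.
\]
For a part \(P\) of the partition, the coarsened row is \(\bar u_P=\sum_{f\in P}u_f\), and its row sum is \(\bar r_P=\sum_{f\in P}r_f\). The corresponding term on the right-hand side is therefore \((\bar u_P x)^2/\bar r_P\). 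It suffices to show, for each part \(P\) separately,
\[
 \sum_{f\in P}\frac{(u_f x)^2}{r_f}\ \ge\ \frac{\bigl(\sum_{f\in P}u_f x\bigr)^2}{\sum_{f\in P}r_f}.
\]
Summing this over the parts then gives \(x^\top(\text{LHS}-\text{RHS})x\ge0\) for every real \(x\), which is exactly the claimed Loewner ordering.

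The per-part inequality is the Cauchy--Schwarz inequality in Engel (Sedrakyan, or Titu) form. Write \(a_f=u_f x\), which may have either sign. Then
\[
 \Bigl(\sum a_f\Bigr)^2=\Bigl(\sum \frac{a_f}{\sqrt{r_f}}\sqrt{r_f}\Bigr)^2\le\Bigl(\sum\frac{a_f^2}{r_f}\Bigr)\Bigl(\sum r_f\Bigr).
\]
This needs only \(r_f>0\), and that holds once zero-mass rows are removed. The column-sum condition and nonnegativity are not needed for the inequality itself. Nonnegativity does guarantee two bookkeeping facts:
\begin{itemize}
 \item a coarsened row has zero mass exactly when all its constituent rows do, so the zero rows removed on the two sides correspond;
 \item removing zero rows changes neither quadratic form, since a nonnegative row with zero sum is identically zero.
\end{itemize}

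No step is really hard. The only care needed is in the zero-row convention and in noting that \(a_f\) may be negative, which Cauchy--Schwarz handles without change. An alternative view is that the map \((a,r)\mapsto a^2/r\) is jointly convex and positively homogeneous, hence subadditive; this gives the same per-part inequality and is the data-processing intuition behind the lemma.
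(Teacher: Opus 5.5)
Your proposal is correct and matches the paper's proof: you write both sides as quadratic forms, apply the weighted (Engel-form) Cauchy--Schwarz inequality within each part of the partition, and sum over the parts. Your remarks on the zero-row convention, and on the fact that \(u_f x\) may be negative, are accurate bookkeeping that the paper leaves implicit.
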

\begin{proof}
For any vector \(v\) and part \(B\), weighted Cauchy--Schwarz gives
\[
 \sum_{f\in B}\frac{(U(f,\cdot)v)^2}{U(f,\cdot)\one}
 \ge
 \frac{\bigl(\sum_{f\in B}U(f,\cdot)v\bigr)^2}
      {\sum_{f\in B}U(f,\cdot)\one}.
\]
Sum over parts to obtain the quadratic-form inequality.
\end{proof}

\begin{lemma}[Constant uniform-prior revealing]
\label{lem:revealing}
For both candidates and every \(h<H\),
\(\|G_{M_\theta,h}^{-1}\|_1\le35/9\).
\end{lemma}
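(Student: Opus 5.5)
The plan is to reduce $G_{M_\theta,h}$ to a $2\times2$ matrix with a fixed structure, bound its smaller eigenvalue from below by coarsening with Lemma~\ref{lem:coarsen}, and convert that bound into a bound on $\|G^{-1}\|_1$.

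\textbf{Stage $h=1$.} Here $|\mathcal S_1|=1$. The matrix $G_{M_\theta,1}=\sum_f U(f,\star)^2/U(f,\star)$ is the scalar $1$, and its inverse has norm $1\le 35/9$.

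\textbf{Structure for $2\le h<H$.} Each column of $U=U_{M_\theta,h}$ sums to one. Hence $G\one=U^\top\diag(U\one)^{-1}U\one=U^\top\one=\one$. Since $G$ is also symmetric, it has the form
\[
G=\begin{pmatrix}1-c&c\\ c&1-c\end{pmatrix},\qquad 0\le c\le\tfrac12 .
\]
Its inverse is $(1-2c)^{-1}\begin{pmatrix}1-c&-c\\-c&1-c\end{pmatrix}$, so $\|G^{-1}\|_1=1/(1-2c)$. Here $\lambda:=1-2c$ is the eigenvalue of $G$ on $(1,-1)^\top$, and since $\lambda\le1$ it is the smaller eigenvalue.

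Any coarsening $\bar U$ has the same structure, because its columns still sum to one. Lemma~\ref{lem:coarsen} gives $G\succeq\bar G$. Both matrices have eigenvector $(1,-1)^\top$, so $\lambda(G)\ge\lambda(\bar G)$ and therefore $\|G^{-1}\|_1\le 1/\lambda(\bar G)$. It remains to show $\lambda(\bar G)\ge 9/35$ for a suitable coarsening.

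\textbf{Choice of coarsening.} I will coarsen the future $F_h$ to the single next action $a_h$. This is legitimate because $o_h$ is a blank symbol for $h<H$. Write $r=1-\theta$.

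\begin{itemize}
\item By \eqref{eq:statefrommemory}, conditioning on $s_h=r$ forces $m_h=r$. The action law is then $1/6$ for $\hold$, $2/3$ for the reset to $r$, and $1/6$ for the reset to $\theta$.
\item Conditioning on $s_h=\theta$ gives $m_h=\never$ with probability $\alpha=w_h/(w_h+(1-w_h)/2)=2w_h/(1+w_h)$, and $m_h=\theta$ otherwise. By Lemma~\ref{lem:belief} and $w_h\le 1/3$, we have $\alpha\in(0,1/2]$. The action law is the mixture $\alpha(2/3,1/6,1/6)+(1-\alpha)(1/6,1/6,2/3)$, in the order ($\hold$, reset $r$, reset $\theta$).
\end{itemize}

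\textbf{Computing the eigenvalue.} With these two columns, $\lambda(\bar G)=\sum_a (U(a,\theta)-U(a,r))^2/(2\,(U(a,\theta)+U(a,r)))$. This is an explicit rational function of $\alpha$.

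The main step is to show this function is minimized over $\alpha\in[0,1/2]$ at the endpoint $\alpha=1/2$, corresponding to $w_h=1/3$ at $h=2$. At that endpoint the $\theta$-column is $(5/12,1/6,5/12)$ against $(1/6,2/3,1/6)$. This gives
\[
\lambda=\frac{(1/4)^2}{2\cdot 7/12}+\frac{(1/2)^2}{2\cdot 5/6}+\frac{(1/4)^2}{2\cdot 7/12}
=\frac{3}{56}+\frac{3}{20}+\frac{3}{56}=\frac{9}{35},
\]
which is exactly the claimed constant.

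To prove monotonicity, note that each term is convex in $\alpha$, since each numerator is the square of an affine function of $\alpha$ and each denominator is affine. I expect the sum to be decreasing on $[0,1/2]$; I would verify this by checking its derivative at $\alpha=1/2$, where it should be negative. This endpoint check is the only delicate step. If it fails, I would fall back to bounding the three terms separately, using $\alpha\le1/2$.
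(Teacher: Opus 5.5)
Your proof is correct and has the same structure as the paper's: a symmetric stochastic $2\times2$ Gram matrix, Lemma~\ref{lem:coarsen}, and $\|G^{-1}\|_1=\lambda^{-1}$. The only difference is which coarsening you use.

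The paper coarsens one step further, to the binary event $E_h$ that the current action resets to $1-\theta$. That reset is nonpreferred under both memory labels $\never$ and $\theta$. So its probability given $s_h=\theta$ is $1/6$ for every mixture weight $\alpha$, and given $s_h=1-\theta$ it is $2/3$. This gives $\lambda_E=(1/6-2/3)^2/\bigl((5/6)(7/6)\bigr)=9/35$ at every stage, with no optimization over $h$. Your three-action coarsening is finer, so its stage-dependent bound is never worse (for example, $3/10$ at $\alpha=0$). The cost is that you must minimize over $\alpha$.

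One correction to that step. Write your eigenvalue as $\phi(\alpha)=3\alpha^2/(8+12\alpha)+3/20+3(1-\alpha)^2/(20-12\alpha)$. The two outer terms have derivatives $\pm 33/196$ at $\alpha=1/2$, so $\phi'(1/2)=0$, not a negative number. This is forced by the structure:
\begin{itemize}
\item At $\alpha=1/2$, the hold row and the reset-to-$\theta$ row of your matrix are both $(5/12,1/6)$.
\item Merging them, which is exactly the paper's binary coarsening, therefore loses nothing.
\item The binary bound $9/35$ holds for all $\alpha\in[0,1]$, so $\alpha=1/2$ is an interior minimizer of $\phi$.
\end{itemize}

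Your convexity argument still closes. With $\phi'(1/2)\le 0$, convexity gives $\phi\ge\phi(1/2)=9/35$ on $[0,1/2]$. However, your stated fallback would not have rescued the constant: bounding the three terms separately yields only $0+3/20+3/56=57/280<9/35$. The simplest repair is to merge the hold and reset-to-$\theta$ rows, which removes the $\alpha$-dependence and turns your proof into the paper's.
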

\begin{proof}
Fix \(2\le h<H\), and order the states as \(\theta,1-\theta\).
Let \(E_h\) be the event that the current action resets to \(1-\theta\).
If \(s_h=1-\theta\), \eqref{eq:statefrommemory} forces
\(m_h=1-\theta\), and this action is preferred. If \(s_h=\theta\),
then \(m_h\in\{\never,\theta\}\), and the action is nonpreferred in both
cases. Consequently,
\[
 q:=\Pp(E_h\mid s_h=\theta)=\frac16,\qquad
 r:=\Pp(E_h\mid s_h=1-\theta)=\frac23 .
\]
The event is part of the future, so its channel is a coarsening of \(U\):
\[
 U_E=\begin{pmatrix}q&r\\1-q&1-r\end{pmatrix}.
\]
Write \(G_E=U_E^\top\diag(U_E\one)^{-1}U_E\).
Both \(G\) and \(G_E\) are symmetric, nonnegative, and stochastic, since
\(G\one=U^\top\one=\one\). In dimension two their eigenvectors are
\(\one\) and \((1,-1)^\top\). Direct calculation gives the nontrivial
eigenvalue of \(G_E\):
\begin{equation}
 \lambda_E=
 \frac{(q-r)^2}{(q+r)(2-q-r)}
 =\frac9{35}.
 \label{eq:eigen}
\end{equation}
Lemma~\ref{lem:coarsen} implies that the nontrivial eigenvalue
\(\lambda\) of \(G\) is at least \(\lambda_E\). Positive semidefiniteness
and stochasticity give \(0<\lambda\le1\), and
\[
 G^{-1}=\frac12
 \begin{pmatrix}1+\lambda^{-1}&1-\lambda^{-1}\\
                 1-\lambda^{-1}&1+\lambda^{-1}\end{pmatrix}.
\]
Its induced 1-norm is \(\lambda^{-1}\le35/9\).
At stage 1 the outcome matrix has one stochastic column and its Gram
matrix is the scalar one.
\end{proof}

The event used to prove the lemma can depend on the candidate under
consideration. The argument establishes the revealing condition separately in each
model; the event is not supplied to the estimator.

\begin{remark}[Terminal stage]
\label{rem:terminal}
At \(h=H\), the outcome is just \(Y\), and
\[
 U_H=\begin{pmatrix}3/4&1/4\\1/4&3/4\end{pmatrix},\qquad
 G_H=\begin{pmatrix}5/8&3/8\\3/8&5/8\end{pmatrix},
 \qquad \|G_H^{-1}\|_1=4.
\]
Thus imposing uniform-prior revealing on every layer, including the
terminal one, changes the common bound to \(C_F=4\).
\end{remark}

\subsection{Prior-weighted outcome revealing}
\begin{lemma}[Constant prior-weighted revealing]
\label{lem:weighted}
For both candidates and every \(h<H\),
\(\|K_{M_\theta,h}^{-1}\|_1\le9\).
\end{lemma}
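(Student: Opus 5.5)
The plan is to copy the proof of Lemma~\ref{lem:revealing}, adding the prior weights. The stage-1 case is trivial: the outcome matrix has a single column, so \(K_{M_\theta,1}\) is the scalar one. Fix \(2\le h<H\) and order the states as \(\theta,1-\theta\). By Lemma~\ref{lem:belief}, the prior is \(p=(\pi,1-\pi)\) with \(\pi=(1+w_h)/2\in(1/2,2/3]\).

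\textbf{Step 1: structure of \(K\).} Set \(D=\diag(p)\) and \(c=\sum_f U(f,\theta)U(f,1-\theta)/(Up)(f)\). I first record three facts about \(K\).
\begin{itemize}
\item \(K\) is column-stochastic, because \(\one^\top K=\one^\top U^\top\diag(Up)^{-1}UD\)-type cancellation gives \(\sum_s K(s,s')=\sum_f U(f,s')=1\).
\item \(K=D\,U^\top\diag(Up)^{-1}U\) is similar to the symmetric positive semidefinite matrix \(D^{1/2}U^\top\diag(Up)^{-1}UD^{1/2}\), so its eigenvalues are \(1\) and some \(\lambda\in(0,1]\).
\item Detailed balance gives the off-diagonal entries \(K_{21}=(1-\pi)c\) and \(K_{12}=\pi c\), and the trace then yields \(\lambda=1-c\).
\end{itemize}
Writing \(a=K_{21}\) and \(b=K_{12}\), the inverse is \(K^{-1}=\lambda^{-1}\bigl(\begin{smallmatrix}1-b&-b\\-a&1-a\end{smallmatrix}\bigr)\). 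Its induced 1-norm is therefore
\[
\|K^{-1}\|_1=\frac{1+|a-b|}{\lambda}=\frac{1+w_h(1-\lambda)}{\lambda}\le\frac{1+w_h}{\lambda}\le\frac{4}{3\lambda}.
\]
So it suffices to bound \(\lambda\) from below.

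\textbf{Step 2: coarsening.} Lemma~\ref{lem:coarsen} applies verbatim to \(UD^{1/2}\) in place of \(U\). Its proof only uses weighted Cauchy--Schwarz and never uses that the columns sum to one, and the row sums of \(UD^{1/2}\) weighted back by \(D^{1/2}\) are exactly \(Up\). The symmetrized matrix therefore dominates its coarsened version in the Loewner order.

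Both symmetrized matrices have top eigenvector \((\sqrt\pi,\sqrt{1-\pi})^\top\) with eigenvalue \(1\). Restricting the quadratic forms to the orthogonal complement gives \(\lambda\ge\lambda_E\), where \(\lambda_E\) is the nontrivial eigenvalue for the two-row channel of the event \(E_h\) from Lemma~\ref{lem:revealing}. That channel has \(q=1/6\) and \(r=2/3\).

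\textbf{Step 3: explicit computation.} Let \(P=\pi q+(1-\pi)r=2/3-\pi/2\). A direct \(2\times2\) computation should give
\[
\lambda_E=\frac{\pi(1-\pi)(q-r)^2}{P(1-P)}.
\]
Check the endpoints of the range of \(\pi\):
\begin{itemize}
\item At \(\pi=1/2\), \(\lambda_E=9/35\).
\item At \(\pi=2/3\), \(\lambda_E=1/4\).
\end{itemize}
In general, \(\pi(1-\pi)\ge2/9\) and \(P(1-P)\le1/4\) on this range, so \(\lambda_E\ge(2/9)(1/4)/(1/4)\cdot\ldots\ge 2/9\). Combining with Step 1, \(\|K^{-1}\|_1\le(4/3)\cdot(9/2)=6\le9\).

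The main obstacle is Step 2. The paper's coarsening lemma is stated for column-stochastic \(U\), so the main work is checking that the weighted Gram matrix coarsens correctly and that the comparison transfers from the symmetric form to the non-symmetric \(K\) through the similarity \(D^{1/2}\). Once that is in place, the rest is the explicit \(2\times2\) algebra above, which has ample slack against the stated constant \(9\).
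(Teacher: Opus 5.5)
Your argument is correct and shares the paper's skeleton. Both proofs coarsen to the $F_h$-measurable event $E_h$ with $q=1/6$, $r=2/3$, bound the binary-channel eigenvalue $\pi(1-\pi)(r-q)^2/(P(1-P))\ge 2/9$, and finish with $\|K^{-1}\|_1=(1+|a-b|)/\lambda$.

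You differ from the paper in two places: how you justify $\lambda\ge\lambda_E$, and the final constant.

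\textbf{The comparison step.} The paper stays probabilistic. It reads off the diagonal of $K$ and uses the trace to identify the nontrivial eigenvalue as $\Var(Z)/(\rho(1-\rho))$, where $Z=\E[X\mid F_h]$ is the posterior probability of state $1-\theta$. The law of total variance then gives $\Var(Z)\ge\Var(\E[X\mid E_h])$ at once, with no symmetrization and no need to revisit Lemma~\ref{lem:coarsen}. You instead conjugate by $D^{1/2}$ and rerun the Cauchy--Schwarz coarsening with normalizer $Up$ in place of $U\one$. This is legitimate, since that normalizer is positive and additive over parts. You then compare quadratic forms on $(\sqrt p)^{\perp}$. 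This is the same inequality in linear-algebraic form, and it has the merit of running exactly parallel to Lemma~\ref{lem:revealing}. The paper's version avoids the reweighting check and interprets the eigenvalue as the fraction of state variance explained by the future.

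\textbf{The constant.} Your detailed-balance relation $K_{21}/K_{12}=(1-\pi)/\pi$, combined with the exact prior $\pi=(1+w_h)/2$, gives $|a-b|=w_h(1-\lambda)\le 1/3$. The paper uses only $|a-b|\le 1$ and $\rho\in[1/3,2/3]$. So you obtain the bound $6$ rather than $9$.

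One cosmetic slip: $\one^\top U^\top\diag(Up)^{-1}UD$ is not $\one^\top K$. The cancellation you want is $\one^\top K=p^\top U^\top\diag(Up)^{-1}U=\one^\top U=\one^\top$, which is what your entrywise sum actually computes.
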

\begin{proof}
The initial-stage matrix is again the scalar one. For \(2\le h<H\), write
\(\rho=\Pp(s_h=1-\theta)\), so Lemma~\ref{lem:belief} gives
\(\rho\in[1/3,2/3]\). Set
\(X=\one\{s_h=1-\theta\}\) and \(Z=\E[X\mid F_h]\).
The matrix \(K\) is nonnegative and column-stochastic. Its diagonal
entries are \(\E[(1-Z)^2]/(1-\rho)\) and \(\E[Z^2]/\rho\).
One eigenvalue is one; subtracting one from its trace shows that the
other is
\begin{equation}
 \lambda_F=\frac{\Var(Z)}{\rho(1-\rho)}.
 \label{eq:posteriorvariance}
\end{equation}
The law of total variance and the measurability of \(E_h\) with respect
to \(F_h\) imply
\(\Var(Z)\ge\Var(\E[X\mid E_h])\).
For a binary channel with conditional probabilities \(q,r\), let
\(m=(1-\rho)q+\rho r\). Bayes' rule, or the covariance formula for two
binary variables, gives
\[
 \frac{\Var(\E[X\mid E_h])}{\rho(1-\rho)}
 =\frac{\rho(1-\rho)(r-q)^2}{m(1-m)}
 \ge\frac{(2/9)(1/4)}{1/4}=\frac29 .
\]
Write \(K=\left(\begin{smallmatrix}1-a&b\\a&1-b\end{smallmatrix}\right)\).
Then \(a,b\ge0\), \(\lambda_F=1-a-b>0\), and direct inversion yields
\[
 \|K^{-1}\|_1
 =\frac{1+|a-b|}{\lambda_F}
 \le\frac2{\lambda_F}\le9 .
\]
\end{proof}

This establishes the obstruction under both normalizations. Changing
the latent-state prior in the revealing matrix does not remove the
information loss.

\subsection{Information in a logged trajectory}
\begin{lemma}[Only all-hold trajectories distinguish the models]
\label{lem:kl}
Let \(P_\theta\) denote the complete observable-trajectory law under
\(M_\theta,\pb\). Then
\begin{equation}
 \KL(P_0\Vert P_1)=\frac{p_H}{2}\log3,\qquad
 p_H=\frac13(2/3)^{H-2}.
 \label{eq:trajectorykl}
\end{equation}
Both observable-trajectory laws have the same support.
\end{lemma}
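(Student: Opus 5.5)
The plan is to factor the observable trajectory law explicitly and use the chain rule for KL divergence. An observable trajectory consists of the action sequence \(a_1,\ldots,a_H\) and the terminal bit \(Y\); all nonterminal observations are deterministic blank symbols. Since the logger ignores observations and depends only on past actions through the label \(m_h\), the law of \((a_1,\ldots,a_H)\) under \(\pb\) is the same in \(M_0\) and \(M_1\). The models therefore differ only through the conditional law of \(Y\) given the action sequence, and the chain rule gives
\[
 \KL(P_0\Vert P_1)=\E^{\pb}\!\left[\KL\bigl(P_0(Y\mid a_{1:H})\,\Vert\,P_1(Y\mid a_{1:H})\bigr)\right].
\]

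Next I will compute the conditional law of \(Y\). By \eqref{eq:statefrommemory} applied at stage \(H\), the terminal state satisfies \(s_H=m_H\) whenever \(m_H\neq\never\). This is the lane set by the last gate among \(a_1,\ldots,a_{H-1}\), and it is the same in both models, so the conditional laws of \(Y\) coincide and the inner KL term vanishes. When \(m_H=\never\), meaning \(a_1=\cdots=a_{H-1}=\hold\), we have \(s_H=\theta\), so \(Y\sim\Ber(1/4)\) under \(M_0\) and \(Y\sim\Ber(3/4)\) under \(M_1\) by \eqref{eq:emission}. The final action \(a_H\) does not affect \(Y\).

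Two facts remain. First, by Lemma~\ref{lem:belief}, \(\Pp(m_H=\never)=w_H=\tfrac13(2/3)^{H-2}=p_H\). Equivalently, this is the probability \(1/3\) of holding at stage 1 times \(2/3\) for each of stages \(2,\ldots,H-1\). Second, \(\kl(1/4,3/4)=\tfrac14\log\tfrac13+\tfrac34\log3=\tfrac12\log3\). Combining them yields \eqref{eq:trajectorykl}.

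For the support claim, every action sequence has positive probability under the common logger, since each action has probability at least \(1/6\). Both \(Y\) values also have positive conditional probability in every case, because the emission probabilities are \(1/4\) and \(3/4\). The supports are therefore identical, namely all of \(\mathcal A^H\times\{0,1\}\) with the blank observations attached.

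There is no real obstacle here. The step that needs the most care is justifying that the action-sequence marginal is model-independent. This holds because the logger is unconfounded and the nonterminal observations are uninformative singletons, so conditioning on actions alone captures the whole trajectory except \(Y\).
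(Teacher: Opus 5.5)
Your proposal is correct and follows essentially the same route as the paper. Both arguments use the common action-sequence law, note that the conditional law of $Y$ differs only on the all-hold event of probability $p_H$, and apply the KL chain rule to get $p_H\kl(1/4,3/4)=\tfrac{p_H}{2}\log3$; the supports coincide because all action and emission probabilities are positive. The only difference is cosmetic: you cite \eqref{eq:statefrommemory} and Lemma~\ref{lem:belief} for the terminal lane and for $p_H=w_H$, whereas the paper argues directly from the last reset.
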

\begin{proof}
Let \(\calE_H=\{a_1=\cdots=a_{H-1}=\hold\}\).
The nonterminal action sequence has the same distribution in both
models, because observations are blank and the logger is a common
function of previous actions. Its all-hold probability is \(p_H\).
If any reset occurs, the last reset determines the terminal state in
both candidates. Thus the conditional law of \(Y\) is the same under
both models for every action sequence outside \(\calE_H\).
On \(\calE_H\), it is \(\Ber(1/4)\) under \(M_0\) and
\(\Ber(3/4)\) under \(M_1\).
The final independent uniform action contributes no divergence. The
KL chain rule therefore gives
\[
 \KL(P_0\Vert P_1)
 =p_H\kl(1/4,3/4)=\frac{p_H}{2}\log3.
\]
Every nonterminal action has positive probability, and both terminal
emissions have positive probability at either bit. The supports coincide.
\end{proof}

\begin{proof}[Proof of Theorem~\ref{thm:main}]
The construction establishes claims (i), (ii), and the value separation.
Lemmas~\ref{lem:belief}, \ref{lem:revealing}, and \ref{lem:weighted}
establish (iii).
Given an accurate estimator, define a test \(\widehat\theta\) that
returns one when \(\widehat J>1/2\), and zero otherwise.
An estimate within \(1/8\) of either true value implies the correct
decision, so both testing errors are at most \(\delta\).
Data processing for relative entropy gives
\[
 \KL(P_0^{\otimes n}\Vert P_1^{\otimes n})
 \ge \kl(1-\delta,\delta).
\]
Indeed the event \(\{\widehat\theta=0\}\) has probability at least
\(1-\delta\) under \(M_0\) and at most \(\delta\) under \(M_1\);
binary KL is minimized at those boundary probabilities.
Independence of the trajectories and Lemma~\ref{lem:kl} prove
\eqref{eq:mainlower}.

The estimator's randomization, including any adaptive sequence of
simulations in the supplied candidates, is a common stochastic kernel
applied to the observed data and known inputs. Data processing applies
to that kernel regardless of its computational cost.
\end{proof}

The last step is the standard testing reduction used in information
lower bounds; related change-of-measure arguments are developed, for
example, by \citet{kaufmann2016}. The contribution here is the simultaneous
coverage-preserving construction.

\section{Proofs for the optimal estimator}
\label{app:exactproof}
\begin{proof}[Proof of Proposition~\ref{prop:experiment}]
Equation~\eqref{eq:ternary} follows from the construction.
For \(W=\bot\), the full-trajectory likelihood is identical under
both candidates on every reset trajectory, and the normalizing
probability \(1-p_H\) is also common. Therefore their conditional
trajectory laws coincide. For \(W=0\) or \(W=1\), all nonterminal actions
are hold and the terminal observation is fixed; only the common
independent final action remains to be sampled. These conditional laws
define the reconstruction kernel. Mapping full trajectories to \(W\)
and reconstructing in the reverse direction shows equality of the
optimal risks.
\end{proof}

\begin{proof}[Proof of Theorem~\ref{thm:minimax}]
The likelihood ratio \(dQ_1^{\otimes n}/dQ_0^{\otimes n}\) is
\(3^{S_n}\). The likelihood-ratio test selects model one when \(S_n>0\),
model zero when \(S_n<0\), and uses a fair coin at a tie. It minimizes
equal-prior Bayes error by selecting the larger likelihood at every
data outcome. The experiment is symmetric under exchanging zero and
one, so the two error probabilities of this test agree. Its maximum
error equals its Bayes error, which lower-bounds the maximum error of
any test. This proves minimax optimality and \eqref{eq:exactrisk}.

A value estimate that succeeds within \(1/8\) identifies the model by
the threshold \(1/2\). Conversely, a test can return the corresponding
value \(1/4\) or \(3/4\), succeeding exactly when its decision is correct.
Thus the testing and value-estimation minimax risks coincide.

If all \(n\) symbols are \(\bot\), the data have the same law under both
models and the equal-prior conditional error is \(1/2\). This event has
probability \((1-p_H)^n\), proving the lower bound.
For the upper bound, the Bayes overlap formula and
\(\min\{a,b\}\le\sqrt{ab}\) imply
\[
 e_n=\frac12\sum_w\min\{Q_0^{\otimes n}(w),Q_1^{\otimes n}(w)\}
 \le \frac12\Aff(Q_0,Q_1)^n .
\]
Directly from \eqref{eq:ternary},
\(\Aff(Q_0,Q_1)=1-p_H+(\sqrt3/2)p_H\), proving \eqref{eq:riskbounds}.
\end{proof}

\begin{proof}[Proof of Corollary~\ref{cor:rate}]
The lower bound is Theorem~\ref{thm:main}. The upper bound follows by
making the right side of \eqref{eq:riskbounds} at most \(\delta\).
For \(0<\delta\le1/4\), binary KL is bounded above and below by
universal positive multiples of \(\log(1/\delta)\).
Also \(-\log(1-c_\star p_H)\ge c_\star p_H\).
These observations and \(p_H^{-1}=3(3/2)^{H-2}\) establish the stated
uniform order, including the harmless integer rounding.
\end{proof}

\section{Logger memory and conditional future laws}
\label{app:memory}
\subsection{The same lane can lead to different logged futures}
\begin{proposition}[Failure of belief-based factorization]
\label{prop:factorization}
For either candidate, there exist two positive-probability histories
at the same nonterminal stage with identical physical-state beliefs
but different conditional future laws. In particular, the identity
\begin{equation}
 \Pp^{\pb}(F_h=\cdot\mid\tau_{h-1})
   =U_{M,h}b_{M,h}(\tau_{h-1})
 \label{eq:failedfactor}
\end{equation}
cannot hold for all histories.
\end{proposition}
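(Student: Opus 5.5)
Fix a candidate \(M_\theta\) and take \(h=3\); the construction requires \(H\ge3\), so stage 3 is nonterminal. I will use the two histories already mentioned in Section~\ref{sec:interpretation}, taking the state to be \(0\). If \(\theta=0\), set \(\tau^{(1)}_2=(o_1,\hold,o_2,\hold)\) and \(\tau^{(2)}_2=(o_1,\resetzero,o_2,\hold)\). If \(\theta=1\), replace the first history by \((o_1,\resetzero,o_2,\resetzero)\) or use lane \(1\) with \(\hold,\hold\) versus \(\resetone,\hold\). Both histories have positive probability, since every action has logging probability at least \(1/6\) and the observations are blank. By \eqref{eq:first}, \eqref{eq:later}, and \eqref{eq:statefrommemory}, the action prefix determines \(s_3\) in a fixed model. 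Both histories therefore end in the same physical state, and \(b_{M,3}(\tau^{(1)}_2)=b_{M,3}(\tau^{(2)}_2)\) is the same coordinate vector \(e_s\).

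Next I compare the future laws through their first action coordinate \(a_3\), which is a marginal of \(F_3\). The observation \(o_3\) is the blank symbol in both cases. The logger's memory label differs: for \(\hold,\hold\) it is \(m_3=\never\), and for \(\resetzero,\hold\) it is \(m_3=0\). For the \(\theta=1\) variant, \(\resetzero,\resetzero\) gives label \(0\) while \(\hold,\hold\) gives \(\never\), and both end in lane \(1\) under \(\theta=1\) only if I use the lane-1 version. The table then gives \(\Pp(a_3=\hold\mid\tau^{(1)}_2)=2/3\) but \(\Pp(a_3=\hold\mid\tau^{(2)}_2)=1/6\). The conditional laws of \(F_3\) already differ in this marginal, so they are different distributions.

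The factorization claim then follows. The right side of \eqref{eq:failedfactor} depends on the history only through \(b_{M,h}(\tau_{h-1})\), so it equals \(U_{M,3}e_s\) for both histories. The left sides are distinct, so the identity fails for at least one of them.

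The only delicate step is the bookkeeping that makes the two chosen histories reach the same lane in the given \(\theta\) while carrying different labels. The cleanest uniform choice is \(\tau^{(1)}=(\hold,\hold)\), which ends in lane \(\theta\) with label \(\never\), and \(\tau^{(2)}\) equal to a reset to \(\theta\) followed by \(\hold\), which ends in lane \(\theta\) with label \(\theta\). This single choice works for both \(\theta\) with no case split. The logger's probability of \(\hold\) is \(2/3\) versus \(1/6\), which completes the argument.
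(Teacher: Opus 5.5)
Your mechanism matches the paper's: two histories end in the same lane but carry logger labels $\never$ versus $\theta$, and you compare them through the first action coordinate of the future. Your final uniform choice, $(\hold,\hold)$ versus a reset to $\theta$ followed by $\hold$, is also correct whenever stage $3$ is nonterminal.

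The gap is your opening claim that $H\ge3$ makes stage $3$ nonterminal. Nonterminal means $h<H$, so for $H=3$ stage $3$ is the terminal stage, and several of your steps fail there:
\begin{itemize}
\item The logger table does not apply, since the stage-$H$ action is an independent uniform draw.
\item $o_3$ is the noisy terminal bit $Y$, not a blank symbol.
\item $F_3=(o_3)$ contains no action at all.
\end{itemize}
In fact \eqref{eq:failedfactor} holds for every history at $h=H$, because $\Pp(F_H=\cdot\mid\tau_{H-1})=\sum_s b_{M,H}(\tau_{H-1})(s)\,O_H(\cdot\mid s)=U_{M,H}b_{M,H}(\tau_{H-1})$. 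So for $H=3$ your argument produces no counterexample. Stage $1$ has only the empty history, so $h=2$ is the only stage where one can exist.

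The repair is to drop the redundant second hold and work at $h=2$, which is nonterminal for every $H\ge3$. In $M_\theta$, the histories with first action $\hold$ and with first action the reset to $\theta$ both give $s_2=\theta$ and the same coordinate belief. They carry labels $\never$ and $\theta$, so the logger's hold probability is $2/3$ versus $1/6$. This is exactly the paper's proof, stated there as $\hold$ versus $\resetzero$ in $M_0$ and $\hold$ versus $\resetone$ in $M_1$. Alternatively, keep stage $3$ for $H\ge4$ and treat $H=3$ separately.

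You should also delete your intermediate $\theta=1$ variant. The histories $(\resetzero,\resetzero)$ and $(\resetzero,\hold)$ both end in lane $0$ with label $0$, so their continuation laws coincide. Comparing $(\resetzero,\resetzero)$ with $(\hold,\hold)$ instead gives different beliefs under $\theta=1$.
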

\begin{proof}
In \(M_0\), at stage \(h=2\), compare the histories whose first actions
are \(\hold\) and \(\resetzero\). Both beliefs equal \(e_0\).
After the first history the next-action probabilities, in the order
\((\hold,\resetzero,\resetone)\), are \((2/3,1/6,1/6)\);
after the second they are \((1/6,2/3,1/6)\).
Their total variation distance is \(1/2\).
Since the current action is a coordinate of \(F_h\), the future laws
also differ. The right side of \eqref{eq:failedfactor} is the same for
the two histories, so it cannot equal both laws.
For \(M_1\), use the histories \(\hold\) and \(\resetone\).
\end{proof}

The logger's actions reveal a correlation with the physical state
under the data-collection law. They do not establish that the physical
state is a sufficient statistic for the logger's continuation. The
prior-weighted normalization changes a marginal distribution, not
this conditional-independence property.

\subsection{Including logger memory makes a rare state explicit}
\begin{proposition}[Coverage after incorporating logger memory]
\label{prop:augmented}
Fix a candidate model and augment the physical state by \(m_h\).
After restricting to its reachable augmented states, the belief
second-moment matrix at stage \(h\ge2\) has an eigenvalue \(w_h\).
Its inverse minimum eigenvalue is at least
\[
 \frac1{w_h}=3(3/2)^{h-2}.
\]
\end{proposition}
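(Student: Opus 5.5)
The plan is to make the augmented belief explicit and read its second-moment matrix off directly. Fix $M_\theta$ and $h\ge2$, and augment the state to $(s_h,m_h)$. By \eqref{eq:statefrommemory}, the physical state is a deterministic function of the memory label. The reachable augmented states are therefore exactly three: $(\theta,\never)$, $(0,0)$ and $(1,1)$, since $m_h=0$ forces $s_h=0$ and $m_h=1$ forces $s_h=1$. Each has positive probability by Lemma~\ref{lem:belief}.

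Next I will argue that the augmented belief given $\tau_{h-1}$ is a coordinate vector. The label $m_h$ is a deterministic function of the action prefix $a_1,\dots,a_{h-1}$, which is contained in $\tau_{h-1}$. Once $m_h$ is known, \eqref{eq:statefrommemory} fixes $s_h$ in the fixed model. Hence $\Pp_M((s_h,m_h)=\cdot\mid\tau_{h-1})$ is the indicator of a single reachable augmented state.

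For a coordinate-vector belief $e_X$, the outer product is $e_Xe_X^\top$. Its expectation is therefore the diagonal matrix of marginal probabilities of the augmented states. Lemma~\ref{lem:belief}, via \eqref{eq:memorymass}, gives these as
\[
\diag\!\left(w_h,\tfrac{1-w_h}{2},\tfrac{1-w_h}{2}\right)
\]
in the order $(\theta,\never),(0,0),(1,1)$. In particular $w_h$ is an eigenvalue, with eigenvector $e_{(\theta,\never)}$.

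To obtain the bound on the inverse minimum eigenvalue, note that $\lambda_{\min}\le w_h$, so its inverse is at least $1/w_h=3(3/2)^{h-2}$. Since $w_h\le1/3\le(1-w_h)/2$, the value $w_h$ is in fact the minimum eigenvalue. The only point needing care is the restriction to reachable states. Unreachable pairs such as $(1-\theta,\never)$ or $(1,0)$ would add zero rows and columns, and hence zero eigenvalues, so restricting to reachable states is exactly what makes $w_h$ the relevant eigenvalue. Beyond that there is no real obstacle: the whole proof reduces to the determinism noted above, that the history determines the augmented state in a fixed model.
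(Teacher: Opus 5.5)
Your proof is correct and follows the paper's argument essentially step for step. You identify the three reachable augmented states $(\theta,\never),(0,0),(1,1)$, use the fact that the action prefix determines the augmented state in a fixed model so the belief is a coordinate vector, and read off the diagonal second-moment matrix $\diag(w_h,(1-w_h)/2,(1-w_h)/2)$ from Lemma~\ref{lem:belief}. Your additional remarks, that $w_h$ is in fact the minimum eigenvalue and that retaining unreachable states would only add zero eigenvalues, are accurate and agree with the paper's surrounding comments.
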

\begin{proof}
The three reachable augmented states are
\((\theta,\never),(0,0),(1,1)\).
Each is determined by the observable action prefix in the fixed
model, so its belief vector is a coordinate vector.
The second-moment matrix is diagonal with entries
\(w_h,(1-w_h)/2,(1-w_h)/2\).
\end{proof}

The logger can be represented as memoryless on this augmented process,
but constant physical-state coverage does not imply constant augmented-state
coverage. A common augmented model class retaining unreachable states
would only make the full-rank condition harder to satisfy.

\subsection{Three memory labels do not imply a short observation window}
For any fixed suffix length \(L\), choose a horizon long enough that a
stage remains after that suffix and before the terminal observation.
Histories beginning with reset zero and reset one, followed by \(L\)
holds, share the same last \(L\) action-observation pairs but retain
belief distance two in \(\ell_1\).
Within \(M_0\), all-hold and reset-zero histories followed by the same
hold suffix have identical beliefs but next-action distributions at
\(\ell_1\) distance one.
All these histories have positive logging probability.
Thus the number of logger memory states can be constant while the
window needed to reproduce its continuation grows with the horizon.

This observation delineates the role of forgetting assumptions in
positive results. For example, the future-dependent analysis of
\citet{zhu2026} uses forgetting and additional approximation and
coverage conditions. The present family admits no uniformly short
forgetting window across horizons at the fixed accuracies above.
For any individual finite horizon, keeping the entire history is
still possible; no impossibility of that trivial window is claimed.

\section{A compact outcome matrix for exact calculations}
\label{app:compact}
The main proof uses only a binary coarsening of each future.
Here we give a statistic retaining the full information about the
current physical state within a fixed candidate model.
This is distinct from the three-symbol statistic in
Proposition~\ref{prop:experiment}, whose unknown is the model index.

Fix \(M_\theta\), a stage \(2\le h\le H\), and \(L=H-h\).
Compress the future to the time and value of its first reset,
\((t,r)\) with \(t\in\{0,\ldots,L-1\}\) and \(r\in\{0,1\}\),
if a reset occurs. If no reset occurs, record \((\varnothing,y)\).
There are \(2L+2\) possible symbols.

Let \(a_m=\pb(\hold\mid m)\) and
\(b_{m,r}=\pb(\text{reset to }r\mid m)\) be read from the logger table.
Put \(w=w_h\). The conditional distribution of memory given the
physical state is
\begin{align}
 \Pp(m_h=\never\mid s_h=\theta)&=\frac{2w}{1+w},&
 \Pp(m_h=\theta\mid s_h=\theta)&=\frac{1-w}{1+w},\label{eq:conditionalmemory}\\
 \Pp(m_h=1-\theta\mid s_h=1-\theta)&=1.&&\nonumber
\end{align}
The compressed outcome matrix is therefore
\begin{align}
 \bar U_h((t,r),s)
   &=\sum_m\Pp(m_h=m\mid s_h=s)a_m^t b_{m,r},\label{eq:compactreset}\\
 \bar U_h((\varnothing,y),s)
   &=\sum_m\Pp(m_h=m\mid s_h=s)a_m^L O_H(y\mid s).
 \label{eq:compacthold}
\end{align}

\begin{proposition}[Exact preservation of both revealing matrices]
\label{prop:compact}
If \(T(F_h)\) is the compressed statistic above, then there is a
reconstruction kernel \(V_h\), independent of \(s_h\), such that
\[
 U_h(f,s)=V_h(f\mid T(f))\bar U_h(T(f),s).
\]
Both the uniform-prior and prior-weighted Gram matrices computed
from \(\bar U_h\) equal their full-future counterparts.
\end{proposition}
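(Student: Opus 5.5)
The plan is to show that \(T(F_h)\) is a sufficient statistic for \(s_h\) under the logger's law, and then that sufficiency preserves both Gram matrices exactly.

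\textbf{Step 1: factorization.} Conditional on \(s_h=s\), the future is generated in two layers. First the memory \(m_h\) is drawn from \eqref{eq:conditionalmemory}. Then actions are drawn from the logger table, with transitions \eqref{eq:later}, the terminal emission, and the independent uniform final action. Fix a future \(f\), and suppose its first reset occurs at offset \(t\) with value \(r\).

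Up to and including that reset, the likelihood is \(a_m^t b_{m,r}\), since observations are blank. After the reset, two things become fixed:
\begin{enumerate}[label=(\alph*)]
\item the memory label is \(r\) regardless of \(m_h\), and the subsequent logger actions depend only on labels updated by later resets;
\item the lane is determined by the last reset, so the law of \(Y\) is fixed by the action sequence, not by \(s\).
\end{enumerate}
Hence the remaining factor, written \(V_h(f\mid(t,r))\), is the probability of the post-reset actions, the terminal observation, and the final action. It depends on neither \(s\) nor \(m_h\), and it sums to one over \(f\) with \(T(f)=(t,r)\).

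If no reset occurs, the likelihood is \(a_m^L O_H(y\mid s)\) times \(1/3\) for the final action. In that case \(V_h(f\mid(\varnothing,y))=1/3\).

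Summing over \(m\) against \(\Pp(m_h=m\mid s_h=s)\) gives \(U_h(f,s)=V_h(f\mid T(f))\,\bar U_h(T(f),s)\), with \(\bar U_h\) as in \eqref{eq:compactreset}--\eqref{eq:compacthold}. I expect this bookkeeping to be the main obstacle. The key point is that the reset erases \(m_h\) as well as the lane. It also needs checking that the stage-\(H\) action belongs to \(F_h\) only through the uniform factor: \(F_h\) ends at \(o_H\), so if anything that factor simply drops out.

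\textbf{Step 2: Gram identities.} Write \(U=D\bar U\), where \(D\) is the nonnegative matrix \(D(f,\sigma)=V_h(f\mid\sigma)\one\{T(f)=\sigma\}\). Its columns have disjoint supports, and each column sums to one. Then \(U\one=D\bar U\one\). For \(f\) with \(T(f)=\sigma\),
\[
\frac{U(f,s)U(f,s')}{U(f,\cdot)\one}=V_h(f\mid\sigma)\,\frac{\bar U(\sigma,s)\bar U(\sigma,s')}{\bar U(\sigma,\cdot)\one}.
\]
Summing over \(f\in T^{-1}(\sigma)\) leaves the \(\bar U\) term, because the \(V_h\) weights sum to one. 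Summing over \(\sigma\) then gives \(U^\top\diag(U\one)^{-1}U=\bar U^\top\diag(\bar U\one)^{-1}\bar U\). This is the equality case of Lemma~\ref{lem:coarsen}, since the rows within each part are proportional.

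The prior-weighted identity follows in the same way: replace \(\one\) by \(p_{M,h}\) in the denominator and left-multiply by \(\diag(p_{M,h})\). The state prior \(p_{M,h}\) is unchanged, and it is computable from Lemma~\ref{lem:belief}. Rows of zero mass correspond under \(T\), so the convention of omitting them is consistent on both sides.
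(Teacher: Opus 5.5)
Your proposal is correct and follows essentially the paper's own argument: the first future reset fixes both the lane and the logger memory at \(r\), which gives a kernel free of \(s_h\), and the Gram identities then hold category by category because the kernel weights in each fiber of \(T\) sum to one (the equality case of Lemma~\ref{lem:coarsen}). Your hedge about the final action resolves as you suspected: \(F_h\) ends at \(o_H\), so the no-reset kernel is simply \(V_h(f\mid(\varnothing,y))=1\).
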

\begin{proof}
After the first future reset to \(r\), the physical bit and logger
memory both equal \(r\). The law of all subsequent actions and
observations is independent of the bit at stage \(h\), conditional
on the reset time and value. Before this first reset all actions
are hold and all observations are fixed.
If no reset occurs, the compressed terminal symbol determines the
entire future. These facts give the reconstruction kernel, with
\(\sum_{f:T(f)=t}V_h(f\mid t)=1\) for each category.

For a category \(t\), write \(v_f=V_h(f\mid t)\) and
\(u_t=\bar U_h(t,\cdot)\).
The contribution of that category to the full uniform-prior Gram
matrix is
\[
 \sum_{f:T(f)=t}
 \frac{(v_fu_t)^\top(v_fu_t)}{v_fu_t\one}
 =\frac{u_t^\top u_t}{u_t\one}\sum_{f:T(f)=t}v_f
 =\frac{u_t^\top u_t}{u_t\one}.
\]
The same calculation with denominator \(v_fu_tp_h\), followed by
left multiplication by \(\diag(p_h)\), proves the prior-weighted
identity. Terms with \(v_f=0\) are omitted.
\end{proof}

The reconstruction is within a fixed model; it may depend on that
model. This is sufficient for computing revealing matrices and is
not a claim that model identity is revealed by the compressed
state experiment.

\section{A deterministic-terminal variant}
\label{app:deterministic}
Replace \eqref{eq:emission} by \(Y=s_H\), leaving transitions and
policies unchanged. The target values become zero and one.
The belief and event-based revealing proofs are unchanged.
The three-symbol experiment becomes
\[
 Q_0=(1-p_H,p_H,0),\qquad Q_1=(1-p_H,0,p_H).
\]
One non-erased observation identifies the model. Conditional on all
erasures, the candidates remain indistinguishable. Consequently,
\[
 \TV(P_0^{\otimes n},P_1^{\otimes n})=1-(1-p_H)^n,
 \qquad e_n=\frac12(1-p_H)^n .
\]
Here the non-erased observations are singular across the two models.
The noisy version in the main theorem has common observable support
and finite KL, so its lower bound does not rely on singularity.

\section{Experimental details and additional diagnostics}
\label{app:experiment_details}
\subsection{Experimental protocol}
Each simulated trajectory records $(h,\text{corridor})$ at nonterminal
stages and $(H,Y)$ at termination. The estimator receives no latent-state
information. Figure~\ref{fig:mechanism} exposes the two lanes solely
to explain the environment. The lane-observation control
adds the lane to the stage-2 observation. Every gate transition counts
as one step in this directed grid; no uncounted movement substeps
are used.

The behavior policy uses only its past actions. A uniform six-sided
draw chooses its preferred action on four faces and each alternative
on one face. The first and final logging actions are uniform over
three actions. The target always continues. The estimator retains
counts $(N_-,N_+)$ of no-reset episodes with terminal reward zero
and one, selects a candidate from the sign of $N_+-N_-$, and draws
an independent fair tie-breaker. For the lane-observation control,
a first continue gives a noiseless signed observation of the model;
in the absence of one, the estimator uses a fair coin.

The master seed is 20260914. NumPy SeedSequence spawns separate
streams by protocol, horizon, candidate, and replicate block.
Each protocol/horizon/candidate uses 256 independent datasets.
The offline budget set is specified in Section~\ref{sec:experiments};
controls use $n\in\{1,2,4,8,16,32,64,128\}$ at
$H\in\{4,12,24\}$. Prefix reuse avoids resimulating smaller budgets
and creates correlation between plotted budgets, not between
replicates at a fixed budget. The full sweep generates 172,233,728
episodes and 3,788,231,680 physical transitions in vectorized batches.
The recorded run uses Python 3.12.14 and NumPy 2.3.5; figures use
Matplotlib 3.11.2. No policy or value function is trained. Population
coverage matrices are computed in exact rational arithmetic before
decimal reporting, with norms maximized over candidates and stages $h<H$.

\subsection{Computing the exact risk and confidence intervals}
Let $M$ count the informative episodes. Then
$M\sim\operatorname{Bin}(n,p_H)$. Under $M_0$, the number of positive
terminal bits conditional on $M=m$ is
$B_m\sim\operatorname{Bin}(m,1/4)$. Consequently,
\begin{equation}
 e_n=\sum_{m=0}^n\binom nm p_H^m(1-p_H)^{n-m}
 \left[\Pp(B_m>m/2)+\frac12\Pp(B_m=m/2)\right].
 \label{eq:binomialrisk}
\end{equation}
This is another expression for Theorem~\ref{thm:minimax}, not an
approximation to it. We evaluate the outer binomial probabilities recursively and the
bracketed term using integer binomial coefficients. At large $n$, a tail is omitted only when its
Chernoff probability bound is below $10^{-14}$. We use no Poisson
approximation. Binary search in $n$ returns the first integer whose
computed error is at most 0.1.

For the target-policy data control, $p_H=1$ and every episode
contributes a noisy bit. Hoeffding's inequality gives $e_n\le e^{-n/8}$,
so $8\log(1/\delta)$ episodes, rounded upward, suffice. The number
of episodes is independent of $H$, although each episode costs $H$
primitive decisions.

For $k$ failures out of $R$ independent datasets, let
$\widehat q=k/R$ and $z=1.95996398454$. The Wilson interval is
\[
 \frac{\widehat q+z^2/(2R)
       \ \pm\ z\sqrt{\widehat q(1-\widehat q)/R+z^2/(4R^2)}}
      {1+z^2/R}.
\]
The main figures use $R=512$, pooling the two symmetric testing
problems. Baseline estimators can have different errors in the two
models, so their additional comparison uses $M_0$ alone and $R=256$.

\begin{figure}[!htbp]
\centering
\includegraphics[width=0.92\linewidth]{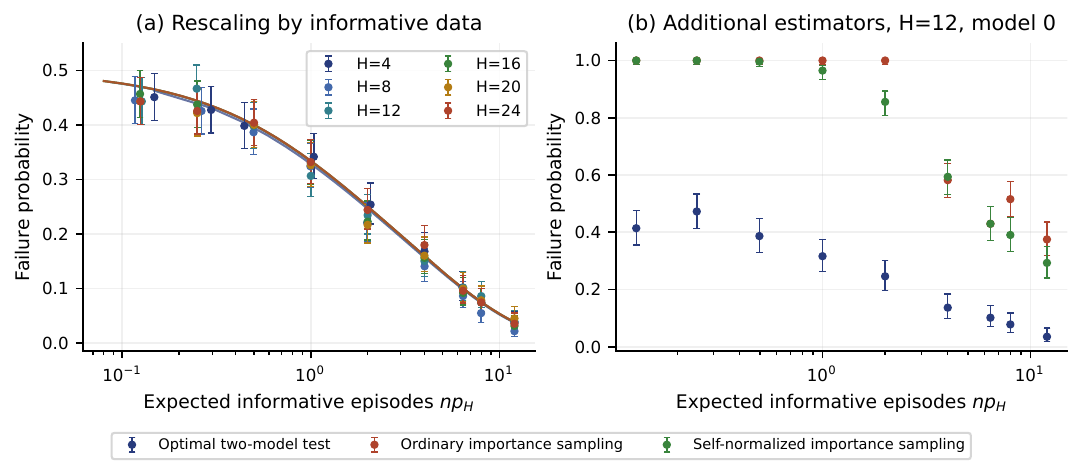}
\caption{Additional diagnostics. (a) Exact curves and simulations
against expected informative sample count; intervals use 512 datasets.
(b) Failure at accuracy $1/8$ for three estimators on the same datasets
at $H=12$ in $M_0$; intervals use 256 datasets. All intervals are
pointwise 95\% Wilson intervals.}
\label{fig:additional}
\end{figure}

\subsection{Rescaling and importance-sampling comparisons}
Figure~\ref{fig:additional}(a) replots the error against $np_H$,
the expected number of informative episodes. The near-alignment
illustrates the effective amount of data. It is not an exact
finite-sample collapse: the distribution of
$\operatorname{Bin}(n,p_H)$ depends on both parameters.

For a secondary comparison, let $I_i$ indicate a no-reset episode.
After marginalizing the irrelevant final action, its trajectory
importance weight is $I_i/p_H$. We compute ordinary importance
sampling and its self-normalized version:
\[
 \widehat J_{\mathrm{IS}}=\frac{\sum_i I_iY_i}{np_H},
 \qquad
 \widehat J_{\mathrm{WIS}}=
 \begin{cases}
   \sum_i I_iY_i/\sum_i I_i,&\sum_i I_i>0,\\
   1/2,&\sum_i I_i=0.
 \end{cases}
\]
The ordinary estimate is not clipped. The self-normalized estimate
uses the midpoint when its denominator is zero; at accuracy $1/8$
that choice fails in both candidates. Unlike the optimal test, these
estimators do not exploit the restriction of the target value to
$\{1/4,3/4\}$. Figure~\ref{fig:additional}(b) is therefore a
comparison of specified estimators on this family, not a general
algorithmic superiority claim.
\end{document}

%% file: threshold_table.tex
\begin{tabular}{rrrrr}
\toprule
$H$ & $N_H(0.1)$ & Analytic failure & Observed failure & 95\% interval\\
\midrule
4 & 43 & 9.89\% & 10.2\% & [7.8, 13.1]\% \\
8 & 219 & 9.97\% & 8.6\% & [6.5, 11.3]\% \\
12 & 1,110 & 9.99\% & 10.2\% & [7.8, 13.1]\% \\
16 & 5,618 & 10.00\% & 9.2\% & [7.0, 12.0]\% \\
20 & 28,441 & 10.00\% & 10.0\% & [7.7, 12.9]\% \\
24 & 143,982 & 10.00\% & 9.6\% & [7.3, 12.4]\% \\
\bottomrule
\end{tabular}

%% file: main.bbl
\begin{thebibliography}{5}
\providecommand{\natexlab}[1]{#1}
\providecommand{\url}[1]{\texttt{#1}}
\expandafter\ifx\csname urlstyle\endcsname\relax
  \providecommand{\doi}[1]{doi: #1}\else
  \providecommand{\doi}{doi: \begingroup \urlstyle{rm}\Url}\fi

\bibitem[Kaufmann et~al.(2016)Kaufmann, Capp{\'e}, and Garivier]{kaufmann2016}
Emilie Kaufmann, Olivier Capp{\'e}, and Aur{\'e}lien Garivier.
\newblock On the complexity of best-arm identification in multi-armed bandit
  models.
\newblock \emph{Journal of Machine Learning Research}, 17\penalty0
  (1):\penalty0 1--42, 2016.
\newblock URL \url{https://jmlr.org/papers/v17/kaufman16a.html}.

\bibitem[Uehara et~al.(2023)Uehara, Kiyohara, Bennett, Chernozhukov, Jiang,
  Kallus, Shi, and Sun]{uehara2023}
Masatoshi Uehara, Haruka Kiyohara, Andrew Bennett, Victor Chernozhukov, Nan
  Jiang, Nathan Kallus, Chengchun Shi, and Wen Sun.
\newblock Future-dependent value-based off-policy evaluation in {POMDPs}.
\newblock In \emph{Advances in Neural Information Processing Systems},
  volume~36, 2023.
\newblock URL \url{https://arxiv.org/abs/2207.13081}.

\bibitem[Zhang and Jiang(2024)]{zhang2024}
Yuheng Zhang and Nan Jiang.
\newblock On the curses of future and history in future-dependent value
  functions for off-policy evaluation.
\newblock In \emph{Advances in Neural Information Processing Systems},
  volume~37, 2024.
\newblock URL \url{https://arxiv.org/abs/2402.14703}.

\bibitem[Zhang and Jiang(2025)]{zhang2025}
Yuheng Zhang and Nan Jiang.
\newblock Statistical tractability of off-policy evaluation of
  history-dependent policies in {POMDPs}.
\newblock In \emph{International Conference on Learning Representations}, 2025.
\newblock URL \url{https://arxiv.org/abs/2503.01134}.

\bibitem[Zhu and Lu(2026)]{zhu2026}
Youheng Zhu and Yiping Lu.
\newblock A covering framework for offline {POMDPs} learning using belief space
  metric.
\newblock \emph{arXiv preprint arXiv:2603.03191}, 2026.
\newblock URL \url{https://arxiv.org/abs/2603.03191}.

\end{thebibliography}
